\newcommand{\bmx}[0]{\begin{bmatrix}}
\newcommand{\emx}[0]{\end{bmatrix}}
\newcommand{\grad}[0]{\nabla}
\Crefname{algocf}{Algorithm}{Algorithms}
\crefname{algorithm}{Algorithm}{Algorithms}
\crefname{equation}{Equation}{Equations}
\crefname{figure}{Figure}{Figure}
\crefname{section}{§}{§§}
\Crefname{section}{§}{§§}
\newtheorem{theorem}{Theorem}
\newcommand{\expect}[2]{\mathop{\mathbb{E}}_{#2} \left[ {#1} \right]}
\newcommand{\stopgrad}[1]{\bot\left({#1}\right)}
\newcommand{\evaluate}[1]{\overrightarrow{#1}}
\newcommand{\resampled}[1]{\tilde{#1}}
\newcommand{\normalized}[1]{\overline{#1}}
\newcommand{\lineage}[1]{\tilde{#1}}
\newcommand{\Lineage}{L}
\newcommand{\sgw}{v}  
\newcommand{\Sgw}{V}  
\title{Differentiable Particle Filtering\\ without Modifying the Forward Pass}
\runningtitle{Differentiable Particle Filtering without Modifying the Forward Pass}
\author{%
  Adam \'Scibior\textsuperscript{1,2}\\
  \texttt{ascibior@cs.ubc.ca}\\
  \And
  Frank Wood\textsuperscript{1,2,3}\\
  \texttt{fwood@cs.ubc.ca}\\
  \AND
  \\
  \textsuperscript{1}University of British Columbia
  \quad
  \textsuperscript{2}Inverted AI
  \quad
  \textsuperscript{3}Mila
}
\runningauthor{Adam \'Scibior, Frank Wood}
\begin{document}
\twocolumn[

\aistatstitle{Differentiable Particle Filtering\\ without Modifying the Forward Pass}

\aistatsauthor{
    Adam \'Scibior\textsuperscript{1,2}\\
    \texttt{ascibior@cs.ubc.ca}\\
    \And
    Frank Wood\textsuperscript{1,2,3}\\
    \texttt{fwood@cs.ubc.ca}
    }

\aistatsaddress{
    \textsuperscript{1}Inverted AI
    \And
    \textsuperscript{2}University of British Columbia
    \And
    \textsuperscript{3}Mila
}
]

\begin{abstract}
    Particle filters are not compatible with automatic differentiation due to the presence of discrete resampling steps. While known estimators for the score function, based on Fisher's identity, can be computed using particle filters, up to this point they required manual implementation. In this paper we show that such estimators can be computed using automatic differentiation, after introducing a simple correction to the particle weights. This correction utilizes the stop-gradient operator and does not modify the particle filter operation on the forward pass, while also being cheap and easy to compute. Surprisingly, with the same correction automatic differentiation also produces good estimators for gradients of expectations under the posterior. We can therefore regard our method as a general recipe for making particle filters differentiable. We additionally show that it produces desired estimators for second-order derivatives and how to extend it to further reduce variance at the expense of additional computation.
\end{abstract}

\section{Introduction}

Sequential latent variable models can be learned by gradient ascent, applying automatic differentiation (AD) to the log-marginal likelihood estimators obtained with particle filters. Examples include robotic localization \citep{jonschkowski_differentiable_2018}, world-modeling in reinforcement learning \citep{igl2018deep}, and deep generative modeling of music and speech \citep{maddison_filtering_2017}. Unfortunately, the discrete resampling steps allowing particle filters to adaptively focus computational resources on promising paths cause problems for automatic differentiation, with the usual algorithms producing gradient estimators that suffer from excessive variance. In current practice, the dependence of the resampling probabilities on model parameters is typically ignored \citep{le_auto-encoding_2018, maddison_filtering_2017, naesseth_variational_2018}, introducing bias that was shown not to vanish asymptotically \citep{corenflos_differentiable_2021} and impede learning.

Several recent articles \citep{corenflos_differentiable_2021, zhu_towards_2020, karkus_particle_2018} have proposed continuous relaxations of the discrete resampling steps to avoid this problem, with different trade-offs and varying degrees of complexity, as discussed in detail in Section \ref{sec:diff-resampling}. 
At the same time, there exist consistent estimators of the gradient of log-marginal likelihood that can be obtained with usual discrete resampling \citep{poyiadjis_particle_2011}, but until now these could not be obtained by applying AD to the log-marginal likelihood estimator produced by a particle filter.

In this work we show that a simple and computationally inexpensive modification to an otherwise entirely standard ``basic'' particle filter implementation allows AD to produce the estimators derived by \citet{poyiadjis_particle_2011}. The modification, shown in Algorithm \ref{alg:pf}, can be thought of as stopping the gradients of weights flowing into the resampling distribution, instead directing them into the resulting weights. Crucially, it does not modify the estimators produced by the particle filter on the forward pass, only changing the gradients computed by AD. We also show that with our modification AD produces sensible, easy to interpret, and consistent estimators for gradients of expectations under the posterior, which are additionally unbiased if the estimator obtained on the forward pass is unbiased. We can therefore regard our modification as a general technique for making a particle filter differentiable, without modifying its behavior on the forward pass. While existing differentiable resampling mechanisms have their merits, the simplicity of our approach is unmatched, so we advocate trying it first before attempting more sophisticated alternatives.

Additionally, in Section \ref{sec:extensions} we show that our results extend to higher-order derivatives computed by repeated AD and that the same method applies to marginal particle filters \citep{klaas_toward_2012}, recovering the more expensive but lower variance estimator proposed by \citet{poyiadjis_particle_2011}. We also briefly discuss the problem of jointly learning a proposal distribution with the model, which is largely left for future work.

\section{Preliminaries}

\begin{algorithm*}
\caption{Forward pass of the standard particle filter (left), compared with our differentiable particle filter (right). The crucial modification required by our method is highlighted in yellow, where the $\bot$ denotes the stop-gradient operator. The application of stop-gradient inside the resampling function is typically redundant, because automatic differentiation libraries do not propagate gradients through samples from discrete distributions. We choose the symbol $\sgw$ to denote the weights of the differentiable particle filter, to distinguish them from the traditionally computed weights in the equations throughout the paper. Where $i$ is not bounded it denotes an operation applied across all $i \in 1:N$, and we use colons to index sequences, e.g. $x_{1:T} = (x_1, \dots, x_T)$.}
\label{alg:pf}
\begin{minipage}{0.45\textwidth}
\begin{algorithmic}
\Function{PF}{$p$, $q$, $N$, $T$}
  \State $x_0^i \sim p(x_0)$
  \State $w_0^i = \normalized{w}_0^i = \frac{1}{N}$
  \For{$t \in 1:T$}
    \If{resampling condition}
      \State $a^{1:N}_{t} \sim R(\normalized{w}^{1:N}_{t-1})$
      \State $\resampled{w}^i_{t} = \frac{1}{N} $
     \Else
       \State $a^i_t = i$
       \State $\resampled{w}^i_t = \normalized{w}^{i}_{t-1}$
    \EndIf
    \State $x^i_t \sim q_{\phi}(x_t | x^{a_t^i}_{t-1})$
    \State $w^i_t = \resampled{w}^i_{t-1} \frac{p_{\theta}(x^i_t, y_t | x^{a_t^i}_{t-1})}{q_{\phi}(x^i_t | x^{a_t^i}_{t-1})}$
    \State $W_t = \sum_{i=1}^N w^i_{t}$
    \State $\normalized{w}^i_t = w^i_t / W_t$
  \EndFor
  \State $\hat{Z}_{\text{PF}} = \prod_{t=1}^T W_t$
\EndFunction
\end{algorithmic}
\end{minipage}
\begin{minipage}{0.45\textwidth}
\begin{algorithmic}
\Function{DPF}{$p$, $q$, $N$, $T$}
  \State $x_0^i \sim p(x_0)$
  \State $\sgw_0^i = \normalized{\sgw}_0^i = \frac{1}{N}$
  \For{$t \in 1:T$}
    \If{resampling condition}
      \State $a^{1:N}_{t} \sim R(\stopgrad{\normalized{\sgw}^{1:N}_{t-1}})$
      \State $\resampled{\sgw}^i_{t} = \frac{1}{N} $ \colorbox{yellow}{$\displaystyle {\normalized{\sgw}^{a^i_{t}}_{t-1} / {\stopgrad{\normalized{\sgw}^{a^i_t}_{t-1}}}}$}
     \Else
       \State $a^i_t = i$
       \State $\resampled{\sgw}^i_t = \normalized{\sgw}^{i}_{t-1}$
    \EndIf
    \State $x^i_t \sim q_{\phi}(x_t | x^{a_t^i}_{t-1})$
    \State $\sgw^i_t = \resampled{\sgw}^i_{t-1} \frac{p_{\theta}(x^i_t, y_t | x^{a_t^i}_{t-1})}{q_{\phi}(x^i_t | x^{a_t^i}_{t-1})}$
    \State $\Sgw_t = \sum_{i=1}^N \sgw^i_{t}$
    \State $\normalized{\sgw}^i_t = \sgw^i_t / \Sgw_t$
  \EndFor
  \State $\hat{Z}_{\text{DPF}} = \prod_{t=1}^T \Sgw_t$
\EndFunction
\end{algorithmic}
\end{minipage}
\end{algorithm*}

We assume familiarity with automatic differentiation \citep{baydin_automatic_2018}, including its application to stochastic computations \citep{schulman_gradient_2016}, but provide some additional exposition to those and other topics in the supplementary materials.

\subsection{Particle filters} \label{sec:pf}

We assume familiarity with particle filters \citep{chopin_introduction_2020, doucet2009tutorial, doucet2001introduction}, using notation defined in Algorithm \ref{alg:pf}. Our results apply with any choice of the resampling scheme $R$ \citep{murray2016parallel}.

We assume that standard conditions for the existing convergence results \citep{moral_feynman-kac_2004, chopin_introduction_2020} are satisfied, specifically for consistency of estimators of the log-marginal likelihood and expectations under the posterior, and for unbiasedness of estimators of expectations under the unnormalized posterior. For score function estimation with particle filters, we use the estimators derived by \citet{poyiadjis_particle_2011} using Fisher's identity, in particular
\begin{align}
    \grad_{\theta} \log p_{\theta}(y_{1:T}) \approx \sum_{i=1}^N \normalized{w}_T^i \grad_{\theta} \log p_{\theta}(\lineage{x}_{1:T}^i, y_{1:T}) ,
    \label{eq:poyadjis}
\end{align}
where $\lineage{x}_{1:t}^i$ is the ancestral lineage of a particle $i$ at time $t$, satisfying the recursive equation $\lineage{x}_{1:t}^i = (\lineage{x}_{1:t-1}^{a_t^i}, x_{t}^i$).
Our key contribution is a construction that enables obtaining this estimator through automatic differentiation of the $\log \hat{Z}$ estimate from a particle filter, without affecting estimators produced on the forward pass.

\subsection{The stop-gradient operator} \label{sec:stop-gradient}

The stop-gradient operator is a standard component of automatic differentiation libraries, called \lstinline{stop_gradient} in Tensorflow and \lstinline{detach} in PyTorch, producing an expression that evaluates to its normal value on the forward pass but produces no gradient. In this section we introduce a minimal formalism required to reason about gradient estimators produced by AD from complex expressions involving stop-gradient.\footnote{Note that there doesn't exist a $\mathbb{R} \rightarrow \mathbb{R}$ function with the properties of stop-gradient, so extra care needs to be taken when manipulating expressions containing it.} Following \citet{foerster_dice_2018}, we denote the stop-gradient with a $\bot$, for example $\grad \stopgrad{x} = 0$.

We refer to expressions containing stop-gradient as \emph{surrogate expressions} and we need to distinguish between the results obtained by evaluating them on the forward pass and the expressions obtained by applying AD. Following \citet{van_krieken_storchastic_2021}, we denote evaluation of an expression $E$ with an overhead right arrow $\evaluate{E}$. Operationally, evaluation removes all instances of stop-gradient, provided that no gradient operator is acting on them. Gradients satisfy the usual chain rules, except that gradients of any expression wrapped in stop-gradient are zero. The key equations are:
\begin{align}
    \evaluate{f(E_1, \dots, E_n)} &= f(\evaluate{E_1}, \dots, \evaluate{E_2}) \\
    \evaluate{\stopgrad{E}} &= \evaluate{E} \\
    \grad{\stopgrad{E}} &= 0 \\
    \evaluate{\grad{E}} &= \grad{E} \quad \text{if} \ \bot \notin E  ,
\end{align}
where $f$ is some fixed differentiable function and $\bot \notin E$ means that stop-gradient is not applied anywhere in expression $E$, including its subexpressions. Crucially, in general $\grad \evaluate{E} \neq \evaluate{\grad E}$, and $\evaluate{E_1} = \evaluate{E_2}$ \emph{does not imply} $\evaluate{\grad E_1} = \evaluate{\grad E_2}$.

We use this calculus to derive formulas for estimators obtained by algorithms utilizing stop-gradient in automatic differentiation, which we generally accomplish by using the rules above, along with the chain rule for the gradient operator, to push the gradients inside, past any stop-gradient operators, and then evaluating the resulting expression. Note that in this calculus the gradient operator produces expressions rather than values, so that we can model repeated automatic differentiation. Expressions without stop-gradient can be equated with their usual interpretations. Throughout the paper we are careful to avoid using integrals of expressions involving stop-gradient, as we consider those undefined. We also present a short tutorial on using this calculus in the appendix.

Surrogate expressions are commonly used when computing gradients of computation involving non-reparameterizable random variables, which is typically implemented using surrogates of the form $f_{\theta}(x) + \stopgrad{f_{\theta}(x)} \log p_{\theta}(x)$. The resulting gradient estimator is known in the reinforcement learning literature as REINFORCE \citep{williams_simple_1992}, and more broadly referred to as the score function method or the likelihood ratio method \citep{fu_chapter_2006}. We refrain from using those terms to avoid confusion with the score function $\grad_{\theta} \log p_{\theta}(y_{1:T})$ and the likelihood $p_{\theta}(y_{1:T})$ associated with the sequential latent variable model. \citet{foerster_dice_2018} provide a method for constructing alternative surrogates, called DiCE, which produces the same gradient estimators but has the additional advantage of preserving the results of evaluating the forward pass, as well as producing unbiased estimators for derivatives of higher order. As an illustrative example, we use the calculus introduced above to compute the gradient estimator produced by DiCE
\begin{align}
    &\evaluate{\grad_{\theta} \left( \frac{p_{\theta}(x)}{\stopgrad{p_{\theta}(x)}} f_{\theta}(x) \right)} = \\
    &f_{\theta}(x) \evaluate{\frac{\grad_{\theta} p_{\theta}(x)}{\stopgrad{p_{\theta}(x)}}} + \evaluate{\frac{p_{\theta}(x)}{\stopgrad{p_{\theta}(x)}}} \grad_{\theta} f_{\theta}(x) = \nonumber \\
    &f_{\theta}(x) \grad_{\theta} \log p_{\theta}(x) + \grad_{\theta} f_{\theta}(x) . \nonumber
\end{align}
Since it produces the same gradient estimators as REINFORCE, a.k.a.~the score function method, a.k.a.~the likelihood ratio method, we use the name DiCE throughout the paper to encapsulate all those methods.

\section{Method}

In this section we derive and analyze the gradient estimators obtained by applying AD to two different versions of a particle filter, namely the standard implementation (PF) and our modified version (DPF), contrasted in Algorithm \ref{alg:pf}. To avoid further complicating notation, throughout the paper we assume that resampling is performed at each time step. These two particle filters are equivalent on the forward pass, in particular $\evaluate{w_t^i} = \evaluate{v_t^i}$ and $\evaluate{\hat{Z}_{\text{PF}}} = \evaluate{\hat{Z}_{\text{DPF}}}$, and similarly $\stopgrad{w_t^i} = \stopgrad{v_t^i}$ and $\stopgrad{\hat{Z}_{\text{PF}}} = \stopgrad{\hat{Z}_{\text{DPF}}}$. The distinction is only material when computing gradients of such expressions, when that is not the case we sometimes drop the subscript and simply write $\hat{Z}$ to avoid clutter. 

\subsection{Estimating the score function} \label{sec:score}

To provide a gentle exposition, we first consider simple importance sampling in a non-sequential latent variable model, without any resampling or ancestral variables. A simple way to obtain score function estimates is to apply automatic differentiation to a program that estimates the log-marginal likelihood with importance sampling, as is done in IWAE \citep{burda_importance_2016}. The forward pass of such a program defines the following objective
\begin{align}
    \mathcal{L}_{\text{IWAE}} =
    \expect{\log \frac{1}{N} \sum_{i=1}^N \frac{p_{\theta}(x^i, y)}{q_{\phi}(x^i)}}{x^i \sim q_{\phi}(x)} .
\end{align}
Automatic differentiation then computes gradients with respect to the model parameters $\theta$ as
\begin{align}
    \grad_{\theta} \mathcal{L}_{\text{IWAE}} =
    \expect{\sum_{i=1}^N \normalized{w}^i \grad_{\theta} \log p_{\theta}(x^i, y)}{x^i \sim q_{\phi}(x)} ,
    \label{eq:iwae-grad}
\end{align}
where $w^i = \frac{p_{\theta}(x^i, y)}{q_{\phi}(x^i)}$ and $\normalized{w}^i = w^i / \sum_{j=1}^N w^j$.

This is exactly the same estimator we would obtain by using Fisher's identity with a self-normalizing importance sampler to approximate the posterior. However, the two methods are no longer equivalent if the proposal distribution also depends on model parameters, such as when using the prior $p_{\theta}(x)$ as the proposal. Applying Fisher's identity produces the same estimator as Eq. \ref{eq:iwae-grad} whether $q$ depends on $\theta$ or not. With automatic differentiation, since the sampling distribution depends on $\theta$ and assuming it can not be reparameterized, we use DiCE and obtain the following gradient estimator

\begin{align}
    \log \hat{Z} \ \sum_{i=1}^N \grad_{\theta} \log q_{\theta}(x^i) +
        \sum_{i=1}^N \normalized{w}^i \grad_{\theta} \log \frac{p_{\theta}(x^i, y)}{q_{\theta}(x^i)} \label{eq:iwae-reinforce} ,
\end{align}
where $\hat{Z} = \sum_{i=1}^N w^i$.
The first term inside the expectation is the problematic one, and it tends to either be dropped, introducing bias, or necessitates the use of baselines or other variance reduction methods. When gradients of $q$ are stopped, the first term disappears, along with the denominator in the second one, recovering the estimator from Eq. \ref{eq:iwae-grad}. We can show this formally using the calculus introduced in Section \ref{sec:stop-gradient}. We introduce a stop-gradient weight $v^i = {p_{\theta}(x^i, y)} / {\stopgrad{q_{\theta}(x^i)}}$ to distinguish it from  $w^i = {p_{\theta}(x^i, y)} / {q_{\theta}(x^i)}$, noting that they evaluate to the same value on the forward pass, that is $\evaluate{v^i} = w^i$. Then
\begin{align}
    & \evaluate{\grad_{\theta} \log \frac{1}{N} \sum_{i=1}^N \frac{p_{\theta}(x^i, y)}{\stopgrad{q_{\theta}(x^i)}}} = \evaluate{\grad_{\theta} \log \frac{1}{N} \sum_{i=1}^N v^i} \\
    & \evaluate{\frac{\frac{1}{N} \sum_{i=1}^N \grad_{\theta} v^i}{\frac{1}{N} \sum_{i=1}^N v^i}} =
    \evaluate{\sum_{i=1}^N \normalized{v}^i \frac{\grad_{\theta} v^i}{v^i}} = \\
    &\evaluate{\sum_{i=1}^N \normalized{\sgw}^i \grad_{\theta} \log \sgw^i} =
    \sum_{i=1}^N \normalized{w}^i \grad_{\theta} \log p_{\theta}(x^i, y) \label{eq:fisher-autodiff} .
\end{align}
We emphasize that the use of this estimator is justified by Fisher's identity, independently of any stop-gradient considerations required to obtain it through automatic differentiation. This lets us access known theoretical results, in particular implying consistency under standard assumptions.

Those results suggests that \emph{when the proposal distribution depends on the model parameters}, stop-gradient should be applied to it, in particular if it can not be reparameterized. In IWAE, or sequential importance sampling, this problem is usually avoided by having separate parameters for the proposal distribution. In particle filters, however, even if the proposal distribution uses separate parameters, the computation involves discrete ancestral indices, which are sampled according to weights, which in turn depend on the model parameters. Stopping those gradients changes the resampling distribution, which requires multiplying the post-resampling weight with the ratio of corresponding densities \citep[Section 3.4.4]{liu_theory_2004}, in this case $\displaystyle {\normalized{\sgw}^{a^i_{t}}_{t-1} / {\stopgrad{\normalized{\sgw}^{a^i_t}_{t-1}}}}$. With DPF as defined in Algorithm \ref{alg:pf}, we have
\begin{align}
    \evaluate{\grad_{\theta} \log \hat{Z}_{\text{DPF}}} =
    \sum_{i=1}^N \normalized{w}_T^i \grad_{\theta} \log p_{\theta}(\lineage{x}_{1:T}^i, y_{1:T}) , \label{eq:dpf-score}
\end{align}
recovering the estimator from Eq. \ref{eq:poyadjis}. Detailed calculations are presented in the supplementary materials. This is known to be a consistent estimator of the score function \citep{poyiadjis_particle_2011} and we obtain it by applying AD to a log-likelihood estimator produced by a particle filter with minor correction to its weights. Those corrections are easy to implement, their computational overhead is small, and they do not affect the estimators produced on the forward pass.

In contrast, applying AD to a standard PF implementation, using DiCE to account for the dependence of resampling distribution on $\theta$, produces the following gradient estimators \citep[Eq. 31]{le_auto-encoding_2018}
\begin{align}
    &\evaluate{\grad_{\theta} \left( \prod_{t=1}^T \prod_{i=1}^N \frac{w_t^{a_t^i}}{\stopgrad{w_t^{a_t^i}}} \log \hat{Z}_{\text{PF}} \right)} = \\
    &\log \hat{Z}_{\text{PF}} \sum_{i=1}^N \sum_{t=1}^T \grad_{\theta} \log \normalized{w}_t^i +
            \grad_{\theta} \log \hat{Z}_{\text{PF}} = \\
    &  \sum_{i=1}^N \sum_{t=1}^T \log \hat{Z} \ \grad_{\theta} \log \normalized{w}_t^i +
         \normalized{w}_t^i \grad_{\theta} \log p_{\theta}(x_t^i, y_t | x_{t-1}^{a_t^i})    . \label{eq:aesmc-grad}
\end{align} 
The first term is typically dropped due to high-variance, introducing bias which was shown not to disappear in the infinite particle limit \citep{corenflos_differentiable_2021}. Comparing with Eq. \ref{eq:dpf-score}, this estimator does not track dependencies across time beyond a single step.

\subsection{Estimating expectations} \label{sec:expect}

While our primary goal is to obtain score function estimates by applying AD to log-likelihood estimates, particle filters can also be used to estimate expectations under filtering or smoothing distributions and we can apply AD to estimate gradients of such expectations as well. In this section we show that with the same DPF implementation, shown in Algorithm \ref{alg:pf}, AD produces sensible and easy to interpret gradient estimators with desirable theoretical properties. In the main text we only state the results, providing derivations and proofs in the supplementary materials.

Consider a function $f_{\theta} : \mathcal{X}^T \rightarrow \mathcal{R}$, differentiable in $\theta$ for all $x_{1:T}$. A particle filter yields a consistent estimator for its expectation under the posterior
\begin{align}
    \lim_{N \rightarrow \infty} \sum_{i=1}^N \normalized{w}_T^i f_{\theta}(\lineage{x}_{1:T}^i) = \expect{f_{\theta}(x_{1:T})}{x \sim p_{\theta}(x_{1:T} | y_{1:T})} .
\end{align}

Applying AD to DPF produces the following consistent estimator for the gradient of this expectation
\begin{align}
    & \grad_{\theta} \expect{f_{\theta}(x_{1:T})}{x \sim p_{\theta}(x_{1:T} | y_{1:T})} \approx
    \evaluate{\grad_{\theta} \sum_{i=1}^N \normalized{v}_T^i f_{\theta}(\lineage{x}_{1:T}^i)} = \nonumber\\
    & \sum_{i=1}^N \normalized{w}_T^i \Big( \grad_{\theta} f_{\theta}(\lineage{x}_{1:T}^i) +\nonumber\\
    &\quad \left( f_{\theta}(\lineage{x}_{1:T}^i) - \normalized{f}_{\theta} \right) \grad_{\theta} \log p_{\theta}(\lineage{x}_{1:T}^i, y_{1:T}) \Big) \label{eq:expect-dpf} ,
\end{align}
where $\normalized{f_{\theta}} = \sum_{i=1}^N \normalized{w}_T^i f_{\theta}(\lineage{x}_{1:T}^i)$.
The first term is the average value of the partial derivative of $f_{\theta}$ at the locations specified by surviving particles, while the second one can be interpreted as a DiCE-like log-derivative correction under the posterior distribution of the full ancestral lineage, with a variance-reducing baseline provided by the average value of $f_{\theta}$ obtained on the forward pass.

In contrast, applying AD to the standard PF implementation with DiCE yields
\begin{align}
     &\normalized{f}_{\theta} \sum_{t=1}^T \sum_{i=1}^N \grad_{\theta} \log \normalized{w}_{t-1}^{a_t^i} + \sum_{i=1}^N \normalized{w}_T^i \Bigg( \grad_{\theta} f_{\theta}(\lineage{x}_{1:T}^i) \ +\nonumber\\
    &\left( f_{\theta}(\lineage{x}_{1:T}^i) - \normalized{f}_{\theta} \right) \grad_{\theta} \log p_{\theta}(x_T, y_T | x_{T-1}^{a_T^i}) \Bigg) \label{eq:expect-pf} .
\end{align}
Once again, this estimators suffers from a credit assignment problem. Gradients of resampling probabilities for particles from time steps other than the last one are all given equal weight, regardless of how the particle performed subsequently.

All the gradient estimators considered so far were biased, which is to be expected given that they were obtained by differentiating biased estimators from the forward pass. However, particle filters are known to produce unbiased estimators for expectations under the \emph{unnormalized} posterior. Differentiating through DPF, we obtain the following gradient estimator
\begin{align}
    & \grad_{\theta} \expect{Z f_{\theta}(x_{1:T})}{x_{1:T} \sim p_{\theta}(x_{1:T} | y_{1:T})} \approx
    \evaluate{\grad_{\theta} \hat{Z}_{\text{DPF}} \sum_{i=1}^N \normalized{v}_T^i f_{\theta}(\lineage{x}_{1:T}^i)} = \\
    & \hat{Z} \sum_{i=1}^N \normalized{w}_T^i \left( f_{\theta}(\lineage{x}_{1:T}^i) \grad_{\theta} \log p_{\theta}(\lineage{x}_{1:T}^i, y_{1:T}) + \grad_{\theta} f_{\theta}(\lineage{x}_{1:T}^i) \right) . \label{eq:dpf-unbiased}
\end{align}
This very similar to Eq. \ref{eq:expect-dpf}, only without the baseline $\normalized{f}_{\theta}$ and it is multiplied by the marginal likelihood estimate $\hat{Z}$. It is both consistent and unbiased, as proven in the supplementary materials.

We compare it with the estimator obtained with PF and DiCE, which is also consistent and unbiased, but suffers from credit assignment problems
\begin{align}
    & \grad_{\theta} \expect{Z f_{\theta}(x_{1:T})}{x_{1:T} \sim p_{\theta}(x_{1:T} | y_{1:T})} \approx \nonumber \\
    & \hat{Z} \sum_{i=1}^N \normalized{w}_T^i \left( \grad_{\theta} f_{\theta}(\lineage{x}_{1:T}^i) +
            \left( f_{\theta}(\lineage{x}_{1:T}^i) - \normalized{f}_{\theta} \right) \grad_{\theta} \log w_T^i \right) + \nonumber \\
    & \quad\quad \hat{Z} \normalized{f}_{\theta} \sum_{t=1}^T \sum_{i=1}^N \grad_{\theta} \log \normalized{w}_{t-1}^{a_t^i} .
\end{align}

\section{Extensions} \label{sec:extensions}


\subsection{Handling proposal distributions} \label{sec:proposal}

So far we have only considered gradients with respect to model parameters $\theta$. However, it is often desirable to jointly learn separate parameters $\phi$ of the proposal distributions. Doing this requires reparameterizing $q_{\phi}$, which we assume to be given by $x_t^i = h_{\phi}(\epsilon_t^i, x_{t-1}^{a_t^i})$ with $\epsilon_t^i \sim \mathcal{N}(0,1)$. This induces a reparameterization across the entire ancestral lineage $\lineage{x}_{1:T}^i = \lineage{h}_{\phi}(\lineage{\epsilon}_{1:T}^i)$, which provides convenient notation. We then compute the result of applying automatic differentiation to $\log \hat{Z}_{\text{DPF}}$
\begin{align}
    \evaluate{\grad_{\phi} \log \hat{Z}_{\text{DPF}}} =
    \sum_{i=1}^N \normalized{w}_T^i \nabla_{\phi} \log \frac{p_{\theta}(\lineage{h}_{\phi}(\lineage{\epsilon}_{1:T}^i), y_{1:T})}{q_{\phi}(\lineage{h}_{\phi}(\lineage{\epsilon}_{1:T}^i))} .
\end{align}
This is very similar to the gradients $\grad_{\phi} \mathcal{L}_{\text{IWAE}}$ \citep[Eq. 14]{burda_importance_2016}, the only difference being that the trajectories are sampled from the particle filter rather then directly from the proposal. We can therefore reasonably expect those gradients to provide learning signal for $\phi$, although it is unclear whether it would be better than $\evaluate{\grad_{\phi} \log \hat{Z}_{\text{PF}}}$ \citep{le_auto-encoding_2018, maddison_filtering_2017, naesseth_variational_2018} due to particle degeneracy problems. Our initial experiments in this regard, presented in Section \ref{sec:agents}, were inconclusive, so we leave this question for future work.

In some settings, particularly when using the bootstrap particle filter, the proposal is derived from model parameters. For the reasons discussed in Section \ref{sec:score}, in such settings we recommend applying stop-gradient to the proposal throughout the program. This was done by \citet[Section 5.1]{le_auto-encoding_2018} without theoretical justification and we do it in our experiments as well.


\subsection{Reducing variance with additional computation} \label{sec:variance}

\begin{algorithm*}
\caption{The marginal particle filter in the usual formulation (left) and in our differentiable formulation (right).
When estimating the score function, DPF2 reduces variance from quadratic to linear in $T$. The computational cost increases from linear to quadratic in $N$.}
\label{alg:quadratic}
\begin{minipage}{0.49\textwidth}
\begin{algorithmic}
\Function{PF2}{$p$, $q$, $N$, $T$}
  \State $x_0^i \sim p(x_0)$
  \State $w_0^i = \normalized{w}_0^i = \frac{1}{N}$
  \For{$t \in 1:T$}
    \State $x^i_t \sim \sum_{j=1}^N \normalized{\sgw}_{t-1}^j q_{\phi}(x_t | x^{a_t^i}_{t-1}, y_t)$
    \State $w_{t}^n = \frac{1}{N} \frac{\sum_{i=1}^N \normalized{w}_{t-1}^i \ p_{\theta}(x_{t}^n, y_{t} | x_{t-1}^i)}{\sum_{i=1}^N \normalized{w}_{t-1}^i \ q_{\phi}(x_{t}^n | x_{t-1}^i, y_{t})}$
    \State $W_t = \sum_{i=1}^N w^i_{t}$
    \State $\normalized{w}^i_t = w^i_t / W_t$
  \EndFor
  \State $\hat{Z}_{\text{PF2}} = \prod_{t=1}^T W_t$
\EndFunction
\end{algorithmic}
\end{minipage}
\begin{minipage}{0.49\textwidth}
\begin{algorithmic}
\Function{DPF2}{$p$, $q$, $N$, $T$}
  \State $x_0^i \sim p(x_0)$
  \State $\sgw_0^i = \normalized{\sgw}_0^i = \frac{1}{N}$
  \For{$t \in 1:T$}
    \State $x^i_t \sim \sum_{j=1}^N \stopgrad{\normalized{\sgw}_{t-1}^j} q_{\phi}(x_t | x^{a_t^i}_{t-1}, y_t)$
    \State $\sgw_{t}^n = \frac{1}{N}$ \colorbox{yellow}{$\frac{\sum_{i=1}^N \normalized{\sgw}_{t-1}^i \ p_{\theta}(x_{t}^n, y_{t} | x_{t-1}^i)}{\sum_{i=1}^N \stopgrad{\normalized{\sgw}_{t-1}^i} \ q_{\phi}(x_{t}^n | x_{t-1}^i, y_{t})}$}
    \State $\Sgw_t = \sum_{i=1}^N \sgw^i_{t}$
    \State $\normalized{\sgw}^i_t = \sgw^i_t / \Sgw_t$
  \EndFor
  \State $\hat{Z}_{\text{DPF2}} = \prod_{t=1}^T \Sgw_t$
\EndFunction
\end{algorithmic}
\end{minipage}
\end{algorithm*}

\cite{poyiadjis_particle_2011} have shown that the variance of score function estimators from Eq. \ref{eq:poyadjis} grows quadratically with sequence length, but can be reduced to linear using what is now known as the marginal particle filter \citep{klaas_toward_2012}. We obtain this reduced-variance estimator with AD by stopping the gradients of the mixture component weights in the proposal distribution. This modification is shown in Algorithm \ref{alg:quadratic}.

\citet[Eq. 20]{poyiadjis_particle_2011} define their reduced-variance score function estimator as
\begin{align}
    \grad_{\theta} \log p_{\theta}(y_{1:T}) \approx \sum_{i=1}^N \normalized{w}_T^i \bar{\alpha}_T^i , \label{eq:poyiadjis-quadratic}
\end{align}
using the marginal particle filter and the following recursive definition of $\bar{\alpha}$
\begin{align}
    &\bar{\alpha}_{t+1}^i = \frac{1}{\sum_{j=1}^N \normalized{w}_{t+1}^j p_{\theta}(x_{t+1}^i | x_t^j)} \\
    &\quad\sum_{j=1}^N \normalized{w}_{t+1}^j p_{\theta}(x_{t+1}^i | x_t^j) \left( 
        \bar{\alpha}_t^j + \grad_{\theta} \log p_{\theta}(x_{t+1}^i, y_{t+1} | x_{t}^j) \right) . \nonumber \label{eq:alpha-quadratic}
\end{align}
In the supplementary materials we show that $\evaluate{\grad_{\theta} \log \hat{Z}_{\text{DPF2}}}$ produces the same estimator.

The marginal particle filter avoids issues with particle degeneracy, at the expense of not producing joint samples $\lineage{x}_{1:T}^i$. This means that it is not suitable for estimating expectations of arbitrary functions under the smoothing density, but on the flipside it achieves linear scaling of variance with $T$ for estimating the score function. Its computational cost is quadratic in the number of particles, since the mixture densities need to be evaluated for each particle separately. As shown by \citet{poyiadjis_particle_2011}, this trade-off is always favorable for sufficiently large $T$.

\citet{lai_variational_2021} propose to use the marginal particle filter to jointly learn proposal parameters $\phi$. Regardless of whether stop-gradients are applied, the key consideration is how to reparameterize sampling from the mixture proposal. \citet{lai_variational_2021} consider two methods. The first draws the mixture component first from a discrete distribution, which introduces bias in the gradients. This bias can be avoided with the stop-gradient corrections of DPF2, but that potentially introduces problems with particle degeneracy, much like in the normal particle filter. The second method is to reparameterize the mixture distribution using cumulative distribution functions \citep{graves_stochastic_2016}. In this method there is still a choice of whether to stop gradients and reparameterize a mixture with fixed weights and it is not clear whether that would perform better or worse. 

\subsection{Higher-order derivatives}

While in most applications we are interested in first-order derivatives, we can apply automatic differentiation repeatedly to obtain derivatives of higher orders. \citet{poyiadjis_particle_2011} show that the Hessian of the log-marginal likelihood can be approximated with a particle filter using Louis' identity
\begin{align}
    &\grad_{\theta}^2 \log p_{\theta}(y_{1:T}) \approx \\
    & - \sum_{i=1}^N \sum_{j=1}^N \normalized{w}_T^i \normalized{w}_T^j \grad_{\theta} \log p_{\theta}(\lineage{x}_{1:T}^i, y_{1:T}) \grad_{\theta} \log p_{\theta}(\lineage{x}_{1:T}^i, y_{1:T})^T \nonumber \\
    & + \sum_{i=1}^N \normalized{w}_T^i \Bigg(\grad_{\theta}^2 \log p_{\theta}(\lineage{x}_{1:T}^i, y_{1:T}) + \nonumber \\
    &   \quad\quad \grad_{\theta} \log p_{\theta}(\lineage{x}_{1:T}^i,, y_{1:T}) \grad_{\theta} \log p_{\theta}(\lineage{x}_{1:T}^i,, y_{1:T})^T \Bigg) \nonumber \label{eq:poyiadjis-hessian},
\end{align}
where superscript $T$ denotes transpose. This is exactly the estimator obtained by applying AD twice to DPF, specifically from $\evaluate{\grad_{\theta}^2 \log \hat{Z}_{\text{DPF}}}$, as we show in the supplementary materials. Furthermore, Hessian-vector products can be computed with reduced memory requirements in the usual fashion.

We can similarly derive higher-order derivatives for expectations under the posterior. We conjecture that DPF with automatic differentiation provides unbiased estimators for derivatives of any order of expectations under the unnormalized posterior, similarly to how DiCE produces unbiased estimators for derivatives of any order, leaving the proof for future work.

\section{Experiments}

In our experiments we focus on the task of model learning using the score function estimators given by $\evaluate{\grad_{\theta} \log \hat{Z}_{\text{DPF}}}$. We show that DPF outperforms PF on model learning, while the results for jointly learning the model and the proposal are open for interpretation.

We use the following baselines: sequential importance sampling without resampling (SIS), biased particle filter ignoring the gradients resulting from resampling (PF), and a particle filter with DiCE for ancestral variables (PF-SF). We refer to our method as DPF-SGR, for ``stop-gradient resampling'', to distinguish it from other versions of differentiable particle filterning proposed in the literature. We borrow the experimental setup from \citet{le_auto-encoding_2018}, using the code provided by those authors with our modifications needed to implement DPF-SGR. An additional experiment on a stochastic volatility model, including comparisons with the method proposed by \citet{corenflos_differentiable_2021}, is provided in the supplementary materials.

\subsection{Linear Gaussian State Space Model} \label{sec:lgssm}
In the first experiment we use a Linear Gaussian State Space Model (LGSSM) with a bootstrap proposal exactly as specified by \citet{le_auto-encoding_2018}, with two global parameters, a one dimensional latent variable and 200 time steps. Figure \ref{fig:experiments/lgssm} in our paper corresponds exactly to Figure 2 from \citet{le_auto-encoding_2018}, with DPF-SGR and PF-SF added. We find that DPF-SGR offers a noticeable improvement over the standard PF in the low particle regime, while in the high particle regime DPF-SGR achieves significantly faster convergence at the expense of a minimal decrease in the final performance.

\begin{figure}[t]
    \centering
    \includegraphics[width=0.49\textwidth]{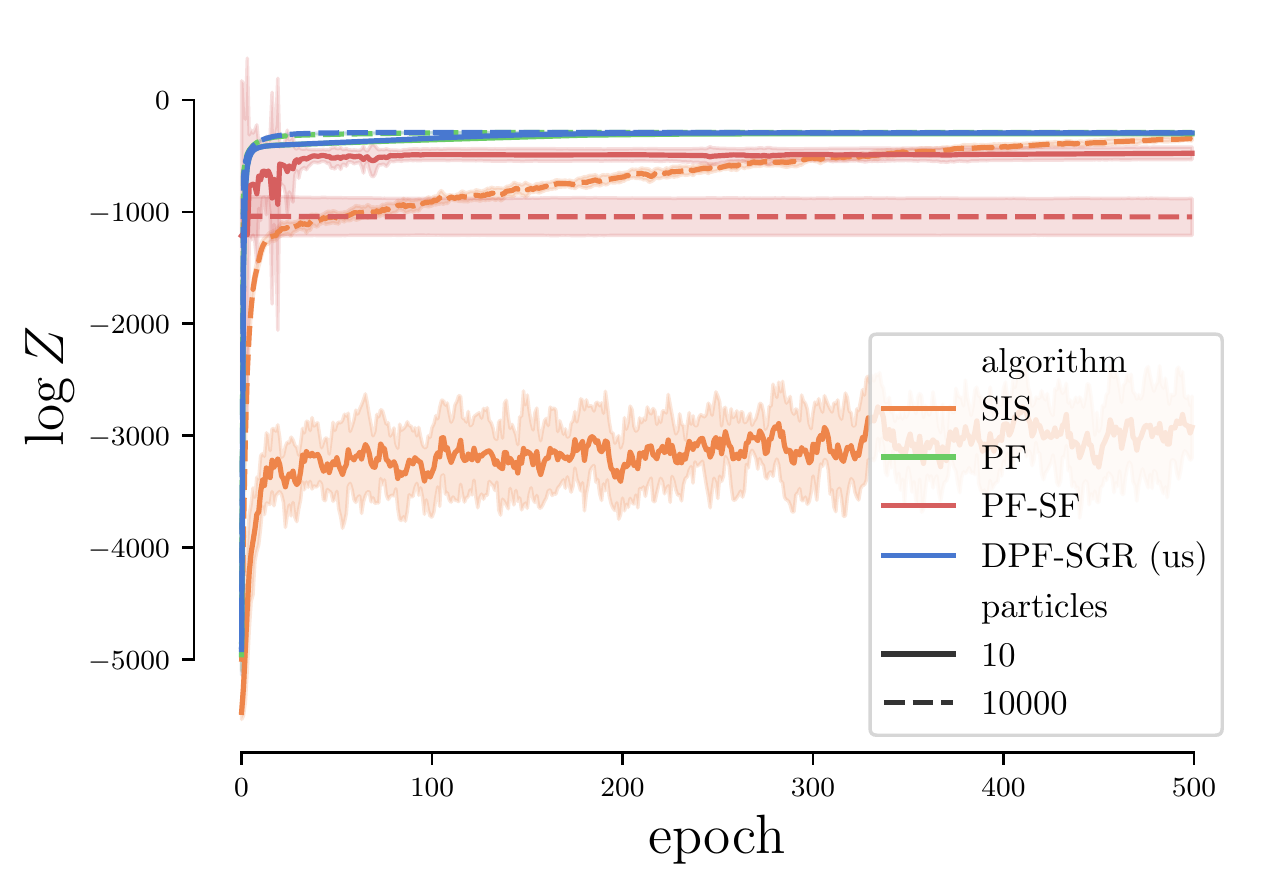}
    \includegraphics[width=0.49\textwidth]{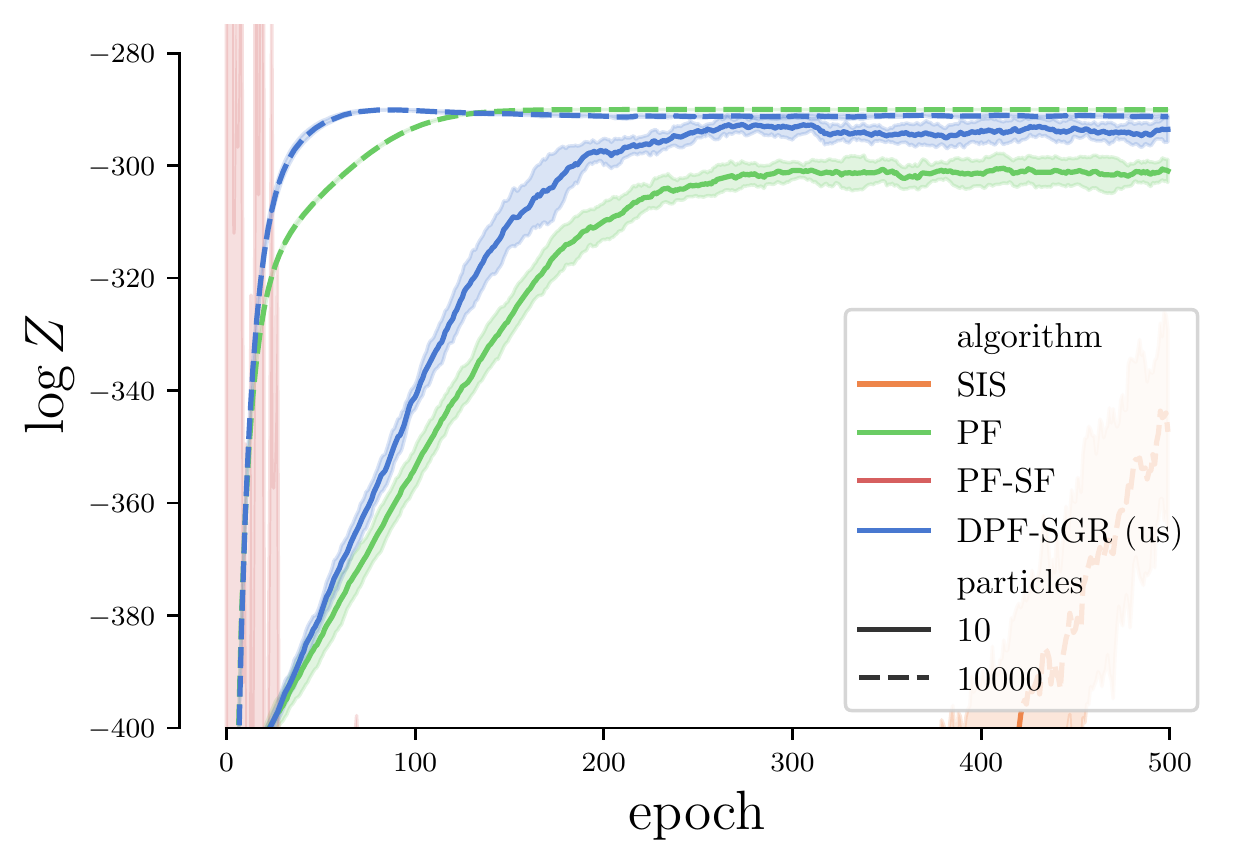}
    \caption{Estimates for the test set log-marginal likelihood throughout the training of the LGSSM model. The bottom plot is the zoomed in version of the top one, showing improvement of DPF-SGR over PF.}
    \label{fig:experiments/lgssm}
\end{figure}

\subsection{Variational Recurrent Neural Network} \label{sec:agents}

To jointly learn the model and the proposal, we train a variational recurrent neural network (VRNN) modeling the movement of agents in a 2D world where the obvservations are provided in the form of partially occluded images. We use the setting introduced by \citet{le_auto-encoding_2018}, with 32x32 images and 40 time steps. Our Figure \ref{fig:experiments/bouncing_ball} corresponds exactly to Figure 4 of \citet{le_auto-encoding_2018}, except that we show performance both on the training and test sets. While PF achieves better performance on the training set, this is largely a result of overfitting, with DPF-SGR learning slower but ultimately achieving better test set performance. It is unclear to us what general conclusions can be drawn from this experiment regarding the suitability of DPF-SGR for learning the proposal, other than that it warrants further investigation.

\begin{figure}[t]
    \centering
    \includegraphics[width=0.49\textwidth]{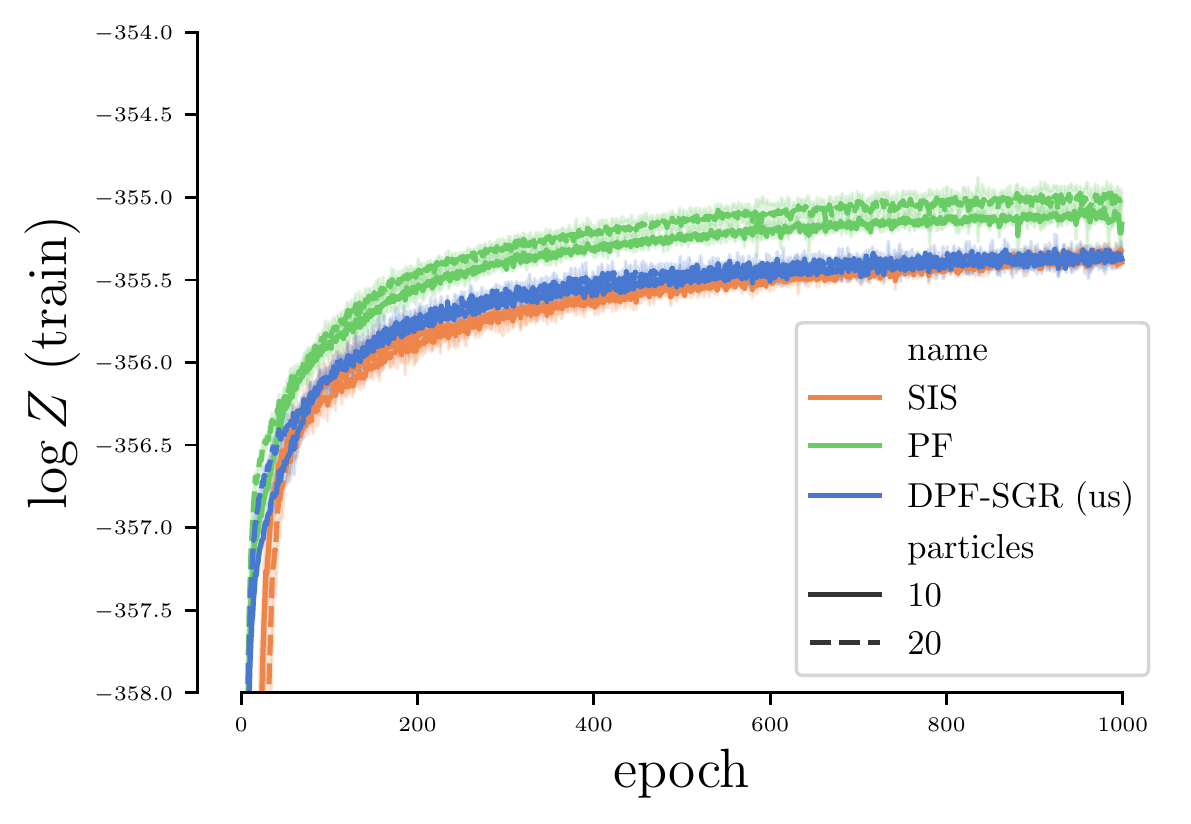}
    \includegraphics[width=0.49\textwidth]{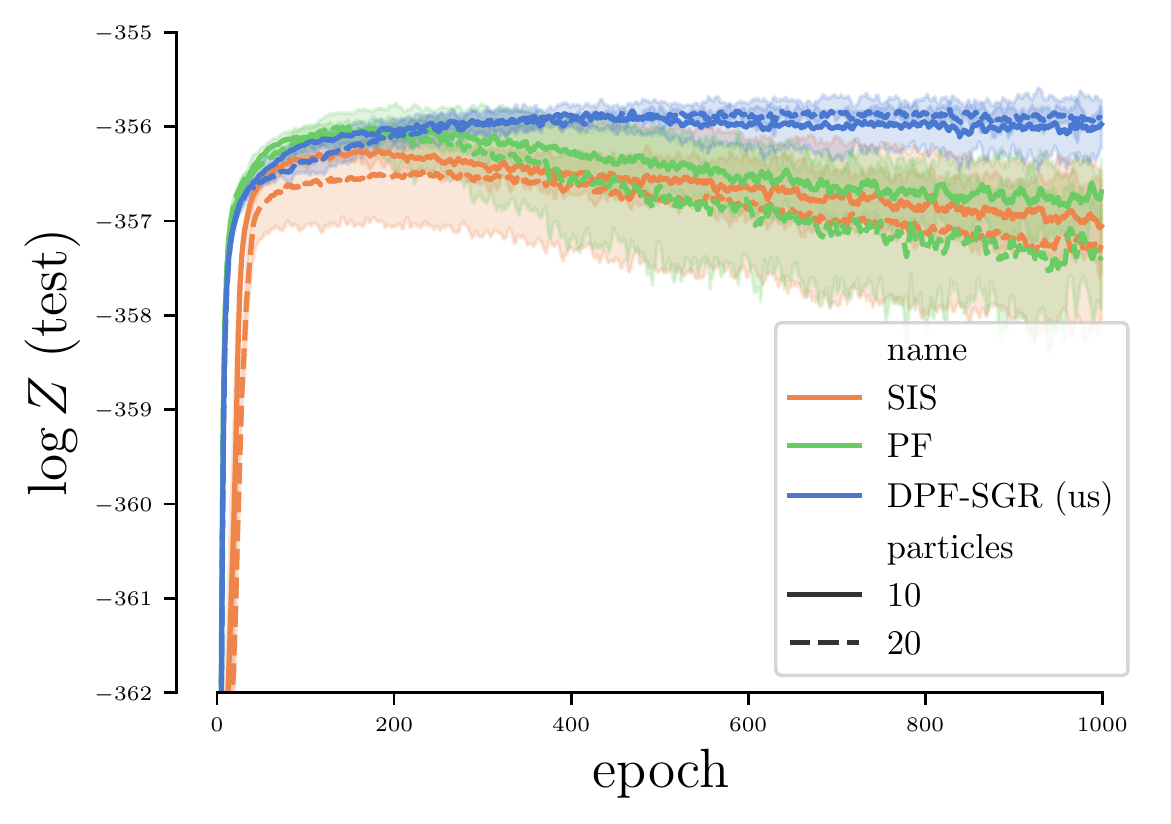}
    \caption{Log-marginal likelihood estimates for training and test set throughout the training of the VRNN model. Unlike DPF-SGR, SIS and PF show clear signs of overfitting.}
    \label{fig:experiments/bouncing_ball}
\end{figure}

\section{Related Work}

\label{sec:diff-resampling}

Due to high variance of estimators obtained by applying DiCE to the ancestral variables, the gradients associated with sampling those variables are typically ignored \citep{le_auto-encoding_2018, maddison_filtering_2017, naesseth_variational_2018}. In particular, \citet{jonschkowski_differentiable_2018} construct an algorithm they call Differentiable Particle Filtering, which explicitly ignores the propagation of gradients through the resampling steps. They suggest adding auxiliary gradients to weights in an attempt to mitigate the resulting bias as a direction of future work, which is exactly what we provide here.

There are several existing approaches to making resampling differentiable, all of which modify its behavior on the forward pass. \citet{karkus_particle_2018} resample with probabilities proportional to $\alpha w^i + \frac{1 - \alpha}{N}$, which effectively interpolates between no resampling and no gradient bias forcing the users to make an undesirable trade-off. We get the best of both worlds, retaining full advantages of resampling while allowing the gradients to propage. \citet{corenflos_differentiable_2021} replace traditional resampling with a continuous transformation of particles based on optimal transport equations. This is computationally intensive, biased, introduces additional hyperparameters, and can lead to undesirable behavior, in the extreme cases even putting the resampled particles outside of the support of the prior. On the other hand, it avoids problems with particle degeneracy while achieving full reparameterization. \citet{zhu_towards_2020} modify particles in a similar fashion using learned resampling networks, with similar disadvantages, noting that in practice propagating gradients through them leads to instabilities so they avoid it.

\section{Discussion}

We have demonstrated that applying AD to DPF recovers known estimators for the score function, which enables maximum likelihood model learning by gradient ascent. We note that those estimators target the gradients of log-marginal likelihood directly, rather than the gradients of some lower bound to it. For this reason it's unclear whether such gradients are suitable for learning the proposal, since the marginal likelihood does not depend on the proposal. The topic of constructing objectives for proposal learning is somewhat controversial, and it has in particular been shown that tightening lower bounds to log-marginal likelihood eventually impedes proposal learning \citep{rainforth_tighter_2019}. Generally, it is neither necessary nor sufficient to optimize a lower bound to log-marginal likelihood to provide good learning signal to the proposal. For example, \citet{le2018revisiting} suggest using a separate objective for proposal learning with alternating optimization, which is competitive but not clearly a superior approach. Since our score function estimators contain bias dependent on the proposal, they might be able to provide a learning signal for it, as indicated in Sections \ref{sec:proposal} and \ref{sec:agents}. We believe the question warrants further investigation.

The stop-gradient operator is relatively rarely used in theoretical formulas and many authors prefer to avoid it. We believe it presents an opportunity to write equations more closely resembling the implementation, reducing the scope for translation errors. Ultimately, stop-gradient is just a different notation for familiar concepts, in particular $\grad_{\theta} \frac{p_{\theta}(x)}{\stopgrad{p_{\theta}(x)}}$ expresses the same thing as $\grad_{\theta'} \frac{p_{\theta'}(x)}{p_{\theta}(x)} \big\rvert_{\theta' = \theta}$, only making it easier to build compound expressions. We believe that using this operator more broadly can drastically simplify certain derivations and proofs.

\section*{Acknowledgements}

We thank Tuan Anh Le, Maximilian Igl, Adrien Corenflos, and James Thornton for sharing the code for their experimental setups which we used to conduct the experiments described in this paper, and Vaden Masrani for helping to adapt this code to our needs. We also thank Marcin Tomczak, Wilder Lavington and the anonymous reviewers for providing feedback on a draft of this paper.

We acknowledge the support of the Natural Sciences and Engineering Research Council of Canada (NSERC), the Canada CIFAR AI Chairs Program, and the Intel Parallel Computing Centers program. This material is based upon work supported by the United States Air Force Research Laboratory (AFRL) under the Defense Advanced Research Projects Agency (DARPA) Data Driven Discovery Models (D3M) program (Contract No. FA8750-19-2-0222) and Learning with Less Labels (LwLL) program (Contract No.FA8750-19-C-0515). Additional support was provided by UBC's Composites Research Network (CRN), Data Science Institute (DSI) and Support for Teams to Advance Interdisciplinary Research (STAIR) Grants. This research was enabled in part by technical support and computational resources provided by WestGrid (https://www.westgrid.ca/) and Compute Canada (www.computecanada.ca).

\balance

\bibliographystyle{plainnat}
\bibliography{iai-refs,extra-refs}

\begin{thebibliography}{41}
\providecommand{\natexlab}[1]{#1}
\providecommand{\url}[1]{\texttt{#1}}
\expandafter\ifx\csname urlstyle\endcsname\relax
  \providecommand{\doi}[1]{doi: #1}\else
  \providecommand{\doi}{doi: \begingroup \urlstyle{rm}\Url}\fi

\bibitem[Abadi et~al.(2015)Abadi, Agarwal, Barham, Brevdo, Chen, Citro,
  Corrado, Davis, Dean, Devin, Ghemawat, Goodfellow, Harp, Irving, Isard, Jia,
  Jozefowicz, Kaiser, Kudlur, Levenberg, Man{\'e}, Monga, Moore, Murray, Olah,
  Schuster, Shlens, Steiner, Sutskever, Talwar, Tucker, Vanhoucke, Vasudevan,
  Vi{\'e}gas, Vinyals, Warden, Wattenberg, Wicke, Yu, and
  Zheng]{tensorflow2015-whitepaper}
Mart{\'i}n Abadi, Ashish Agarwal, Paul Barham, Eugene Brevdo, Zhifeng Chen,
  Craig Citro, Greg~S. Corrado, Andy Davis, Jeffrey Dean, Matthieu Devin,
  Sanjay Ghemawat, Ian Goodfellow, Andrew Harp, Geoffrey Irving, Michael Isard,
  Yangqing Jia, Rafal Jozefowicz, Lukasz Kaiser, Manjunath Kudlur, Josh
  Levenberg, Dandelion Man{\'e}, Rajat Monga, Sherry Moore, Derek Murray, Chris
  Olah, Mike Schuster, Jonathon Shlens, Benoit Steiner, Ilya Sutskever, Kunal
  Talwar, Paul Tucker, Vincent Vanhoucke, Vijay Vasudevan, Fernanda Vi{\'e}gas,
  Oriol Vinyals, Pete Warden, Martin Wattenberg, Martin Wicke, Yuan Yu, and
  Xiaoqiang Zheng.
\newblock {{TensorFlow}}: Large-scale machine learning on heterogeneous
  systems, 2015.

\bibitem[Baydin et~al.(2018)Baydin, Pearlmutter, Radul, and
  Siskind]{baydin_automatic_2018}
Atilim~Gunes Baydin, Barak~A. Pearlmutter, Alexey~Andreyevich Radul, and
  Jeffrey~Mark Siskind.
\newblock Automatic differentiation in machine learning: A survey.
\newblock \emph{arXiv:1502.05767 [cs, stat]}, February 2018.

\bibitem[Burda et~al.(2016)Burda, Grosse, and
  Salakhutdinov]{burda_importance_2016}
Yuri Burda, Roger Grosse, and Ruslan Salakhutdinov.
\newblock Importance {{Weighted Autoencoders}}.
\newblock \emph{arXiv:1509.00519 [cs, stat]}, November 2016.

\bibitem[Chopin and Papaspiliopoulos(2020)]{chopin_introduction_2020}
Nicolas Chopin and Omiros Papaspiliopoulos.
\newblock \emph{An {{Introduction}} to {{Sequential Monte Carlo}}}.
\newblock Springer {{Series}} in {{Statistics}}. {Springer International
  Publishing}, 2020.
\newblock ISBN 978-3-030-47844-5.
\newblock \doi{10.1007/978-3-030-47845-2}.

\bibitem[Corenflos et~al.(2021)Corenflos, Thornton, Doucet, and
  Deligiannidis]{corenflos_differentiable_2021}
Adrien Corenflos, James Thornton, Arnaud Doucet, and George Deligiannidis.
\newblock Differentiable {{Particle Filtering}} via {{Entropy}}-{{Regularized
  Optimal Transport}}.
\newblock \emph{arXiv:2102.07850 [cs, stat]}, February 2021.

\bibitem[Doucet and Johansen(2009)]{doucet2009tutorial}
Arnaud Doucet and Adam~M Johansen.
\newblock A tutorial on particle filtering and smoothing: {F}ifteen years
  later.
\newblock \emph{Handbook of nonlinear filtering}, 12\penalty0
  (656-704):\penalty0 3, 2009.

\bibitem[Doucet et~al.(2001)Doucet, De~Freitas, and
  Gordon]{doucet2001introduction}
Arnaud Doucet, Nando De~Freitas, and Neil Gordon.
\newblock An introduction to sequential {M}onte {C}arlo methods.
\newblock In \emph{Sequential Monte Carlo methods in practice}, pages 3--14.
  Springer, 2001.

\bibitem[Farquhar et~al.(2019)Farquhar, Whiteson, and
  Foerster]{farquhar_loaded_2019}
Gregory Farquhar, Shimon Whiteson, and Jakob Foerster.
\newblock Loaded {{DiCE}}: Trading off {{Bias}} and {{Variance}} in
  {{Any}}-{{Order Score Function Estimators}} for {{Reinforcement Learning}}.
\newblock \emph{arXiv:1909.10549 [cs, stat]}, September 2019.

\bibitem[Fearnhead and Clifford(2003)]{fearnhead_-line_2003}
Paul Fearnhead and Peter Clifford.
\newblock On-{{Line Inference}} for {{Hidden Markov Models}} via {{Particle
  Filters}}.
\newblock \emph{Journal of the Royal Statistical Society. Series B (Statistical
  Methodology)}, 65\penalty0 (4):\penalty0 887--899, 2003.
\newblock ISSN 1369-7412.

\bibitem[Foerster et~al.(2018)Foerster, Farquhar, {Al-Shedivat},
  Rockt{\"a}schel, Xing, and Whiteson]{foerster_dice_2018}
Jakob Foerster, Gregory Farquhar, Maruan {Al-Shedivat}, Tim Rockt{\"a}schel,
  Eric Xing, and Shimon Whiteson.
\newblock {{DiCE}}: The {{Infinitely Differentiable Monte Carlo Estimator}}.
\newblock In \emph{International {{Conference}} on {{Machine Learning}}}, pages
  1529--1538. {PMLR}, July 2018.

\bibitem[Fu(2006)]{fu_chapter_2006}
Michael~C. Fu.
\newblock Chapter 19 {{Gradient Estimation}}.
\newblock In Shane~G. Henderson and Barry~L. Nelson, editors, \emph{Handbooks
  in {{Operations Research}} and {{Management Science}}}, volume~13 of
  \emph{Simulation}, pages 575--616. {Elsevier}, January 2006.
\newblock \doi{10.1016/S0927-0507(06)13019-4}.

\bibitem[Glasserman(2003)]{glasserman_monte_2003}
Paul Glasserman.
\newblock \emph{Monte {{Carlo Methods}} in {{Financial Engineering}}}.
\newblock Stochastic {{Modelling}} and {{Applied Probability}}.
  {Springer-Verlag}, {New York}, 2003.
\newblock ISBN 978-0-387-00451-8.
\newblock \doi{10.1007/978-0-387-21617-1}.

\bibitem[Graves(2016)]{graves_stochastic_2016}
Alex Graves.
\newblock Stochastic {{Backpropagation}} through {{Mixture Density
  Distributions}}.
\newblock \emph{arXiv:1607.05690 [cs]}, July 2016.

\bibitem[Griewank(1989)]{griewank_automatic_1989}
Andreas Griewank.
\newblock On {{Automatic Differentiation}}.
\newblock In \emph{Mathematical {{Programming}}: Recent {{Developments}} and
  {{Applications}}}, pages 83--108. {Kluwer Academic Publishers}, {Amsterdam},
  1989.

\bibitem[Igl et~al.(2018)Igl, Zintgraf, Le, Wood, and Whiteson]{igl2018deep}
Maximilian Igl, Luisa Zintgraf, Tuan~Anh Le, Frank Wood, and Shimon Whiteson.
\newblock Deep variational reinforcement learning for {POMDP}s.
\newblock In \emph{International Conference on Machine Learning}, pages
  2117--2126. PMLR, 2018.

\bibitem[Jankowiak and Obermeyer(2018)]{jankowiak_pathwise_2018}
Martin Jankowiak and Fritz Obermeyer.
\newblock Pathwise {{Derivatives Beyond}} the {{Reparameterization Trick}}.
\newblock \emph{arXiv:1806.01851 [cs, stat]}, July 2018.

\bibitem[Jonschkowski et~al.(2018)Jonschkowski, Rastogi, and
  Brock]{jonschkowski_differentiable_2018}
Rico Jonschkowski, Divyam Rastogi, and Oliver Brock.
\newblock Differentiable {{Particle Filters}}: End-to-{{End Learning}} with
  {{Algorithmic Priors}}.
\newblock \emph{arXiv:1805.11122 [cs, stat]}, May 2018.

\bibitem[Karkus et~al.(2018)Karkus, Hsu, and Lee]{karkus_particle_2018}
Peter Karkus, David Hsu, and Wee~Sun Lee.
\newblock Particle {{Filter Networks}} with {{Application}} to {{Visual
  Localization}}.
\newblock In \emph{Conference on {{Robot Learning}}}, pages 169--178. {PMLR},
  October 2018.

\bibitem[Kingma and Welling(2014)]{kingma_auto-encoding_2014}
Diederik~P. Kingma and Max Welling.
\newblock Auto-{{Encoding Variational Bayes}}.
\newblock \emph{arXiv:1312.6114 [cs, stat]}, May 2014.

\bibitem[Klaas et~al.(2012)Klaas, {de Freitas}, and Doucet]{klaas_toward_2012}
Mike Klaas, Nando {de Freitas}, and Arnaud Doucet.
\newblock Toward {{Practical N2 Monte Carlo}}: The {{Marginal Particle
  Filter}}.
\newblock \emph{arXiv:1207.1396 [cs, stat]}, July 2012.

\bibitem[Lai et~al.(2021)Lai, Sheldon, and Domke]{lai_variational_2021}
Jinlin Lai, Daniel Sheldon, and Justin Domke.
\newblock Variational {{Marginal Particle Filters}}.
\newblock \emph{arXiv:2109.15134 [cs, stat]}, September 2021.

\bibitem[Le et~al.(2018{\natexlab{a}})Le, Igl, Rainforth, Jin, and
  Wood]{le_auto-encoding_2018}
Tuan~Anh Le, Maximilian Igl, Tom Rainforth, Tom Jin, and Frank Wood.
\newblock Auto-{{Encoding Sequential Monte Carlo}}.
\newblock \emph{arXiv:1705.10306 [stat]}, April 2018{\natexlab{a}}.

\bibitem[Le et~al.(2018{\natexlab{b}})Le, Kosiorek, Siddharth, Teh, and
  Wood]{le2018revisiting}
Tuan~Anh Le, Adam~R Kosiorek, N~Siddharth, Yee~Whye Teh, and Frank Wood.
\newblock Revisiting reweighted wake-sleep.
\newblock 2018{\natexlab{b}}.

\bibitem[Liu(2004)]{liu_theory_2004}
Jun~S. Liu.
\newblock Theory of {{Sequential Monte Carlo}}.
\newblock In Jun~S. Liu, editor, \emph{Monte {{Carlo Strategies}} in
  {{Scientific Computing}}}, Springer {{Series}} in {{Statistics}}, pages
  53--77. {Springer}, {New York, NY}, 2004.
\newblock ISBN 978-0-387-76371-2.
\newblock \doi{10.1007/978-0-387-76371-2_3}.

\bibitem[Maddison et~al.(2017)Maddison, Lawson, Tucker, Heess, Norouzi, Mnih,
  Doucet, and Teh]{maddison_filtering_2017}
Chris~J. Maddison, John Lawson, George Tucker, Nicolas Heess, Mohammad Norouzi,
  Andriy Mnih, Arnaud Doucet, and Yee Teh.
\newblock Filtering {{Variational Objectives}}.
\newblock \emph{Advances in Neural Information Processing Systems}, 30, 2017.

\bibitem[Mao et~al.(2019)Mao, Foerster, Rockt{\"a}schel, {Al-Shedivat},
  Farquhar, and Whiteson]{mao_baseline_2019}
Jingkai Mao, Jakob Foerster, Tim Rockt{\"a}schel, Maruan {Al-Shedivat}, Gregory
  Farquhar, and Shimon Whiteson.
\newblock A {{Baseline}} for {{Any Order Gradient Estimation}} in {{Stochastic
  Computation Graphs}}.
\newblock In \emph{Proceedings of the 36th {{International Conference}} on
  {{Machine Learning}}}, pages 4343--4351. {PMLR}, May 2019.

\bibitem[Moral(2004)]{moral_feynman-kac_2004}
Pierre~Del Moral.
\newblock \emph{Feynman-{{Kac Formulae}}: Genealogical and {{Interacting
  Particle Systems}} with {{Applications}}}.
\newblock Probability and {{Its Applications}}. {Springer-Verlag}, {New York},
  2004.
\newblock ISBN 978-0-387-20268-6.
\newblock \doi{10.1007/978-1-4684-9393-1}.

\bibitem[Murray et~al.(2016)Murray, Lee, and Jacob]{murray2016parallel}
Lawrence~M Murray, Anthony Lee, and Pierre~E Jacob.
\newblock Parallel resampling in the particle filter.
\newblock \emph{Journal of Computational and Graphical Statistics}, 25\penalty0
  (3):\penalty0 789--805, 2016.

\bibitem[Naesseth et~al.(2018)Naesseth, Linderman, Ranganath, and
  Blei]{naesseth_variational_2018}
Christian Naesseth, Scott Linderman, Rajesh Ranganath, and David Blei.
\newblock Variational {{Sequential Monte Carlo}}.
\newblock In \emph{International {{Conference}} on {{Artificial Intelligence}}
  and {{Statistics}}}, pages 968--977. {PMLR}, March 2018.

\bibitem[Neal and Hinton(1998)]{neal_view_1998}
Radford~M. Neal and Geoffrey~E. Hinton.
\newblock A {{View}} of the {{Em Algorithm}} that {{Justifies Incremental}},
  {{Sparse}}, and other {{Variants}}.
\newblock In Michael~I. Jordan, editor, \emph{Learning in {{Graphical
  Models}}}, {{NATO ASI Series}}, pages 355--368. {Springer Netherlands},
  {Dordrecht}, 1998.
\newblock ISBN 978-94-011-5014-9.
\newblock \doi{10.1007/978-94-011-5014-9_12}.

\bibitem[Paszke et~al.(2019)Paszke, Gross, Massa, Lerer, Bradbury, Chanan,
  Killeen, Lin, Gimelshein, Antiga, Desmaison, K{\"o}pf, Yang, DeVito, Raison,
  Tejani, Chilamkurthy, Steiner, Fang, Bai, and Chintala]{paszke_pytorch_2019}
Adam Paszke, Sam Gross, Francisco Massa, Adam Lerer, James Bradbury, Gregory
  Chanan, Trevor Killeen, Zeming Lin, Natalia Gimelshein, Luca Antiga, Alban
  Desmaison, Andreas K{\"o}pf, Edward Yang, Zach DeVito, Martin Raison, Alykhan
  Tejani, Sasank Chilamkurthy, Benoit Steiner, Lu~Fang, Junjie Bai, and Soumith
  Chintala.
\newblock {{PyTorch}}: An {{Imperative Style}}, {{High}}-{{Performance Deep
  Learning Library}}.
\newblock \emph{arXiv:1912.01703 [cs, stat]}, December 2019.

\bibitem[Poyiadjis et~al.(2011)Poyiadjis, Doucet, and
  Singh]{poyiadjis_particle_2011}
George Poyiadjis, Arnaud Doucet, and Sumeetpal Singh.
\newblock Particle approximations of the score and observed information matrix
  in state space models with application to parameter estimation.
\newblock \emph{Biometrika}, 98\penalty0 (1):\penalty0 65--80, 2011.
\newblock ISSN 0006-3444.

\bibitem[Rainforth et~al.(2019)Rainforth, Kosiorek, Le, Maddison, Igl, Wood,
  and Teh]{rainforth_tighter_2019}
Tom Rainforth, Adam~R. Kosiorek, Tuan~Anh Le, Chris~J. Maddison, Maximilian
  Igl, Frank Wood, and Yee~Whye Teh.
\newblock Tighter {{Variational Bounds}} are {{Not Necessarily Better}}.
\newblock \emph{arXiv:1802.04537 [cs, stat]}, March 2019.

\bibitem[Ranganath et~al.(2013)Ranganath, Gerrish, and
  Blei]{ranganath_black_2013}
Rajesh Ranganath, Sean Gerrish, and David~M. Blei.
\newblock Black {{Box Variational Inference}}.
\newblock \emph{arXiv:1401.0118 [cs, stat]}, December 2013.

\bibitem[Rezende et~al.(2014)Rezende, Mohamed, and
  Wierstra]{rezende_stochastic_2014}
Danilo~Jimenez Rezende, Shakir Mohamed, and Daan Wierstra.
\newblock Stochastic {{Backpropagation}} and {{Approximate Inference}} in
  {{Deep Generative Models}}.
\newblock \emph{arXiv:1401.4082 [cs, stat]}, May 2014.

\bibitem[Rumelhart et~al.(1986)Rumelhart, Hinton, and
  Williams]{rumelhart_learning_1986}
David~E. Rumelhart, Geoffrey~E. Hinton, and Ronald~J. Williams.
\newblock Learning representations by back-propagating errors.
\newblock \emph{Nature}, 323\penalty0 (6088):\penalty0 533--536, October 1986.
\newblock ISSN 1476-4687.
\newblock \doi{10.1038/323533a0}.

\bibitem[Schulman et~al.(2016)Schulman, Heess, Weber, and
  Abbeel]{schulman_gradient_2016}
John Schulman, Nicolas Heess, Theophane Weber, and Pieter Abbeel.
\newblock Gradient {{Estimation Using Stochastic Computation Graphs}}.
\newblock \emph{arXiv:1506.05254 [cs]}, January 2016.

\bibitem[{van Krieken} et~al.(2021){van Krieken}, Tomczak, and ten
  Teije]{van_krieken_storchastic_2021}
Emile {van Krieken}, Jakub~M. Tomczak, and Annette ten Teije.
\newblock Storchastic: A {{Framework}} for {{General Stochastic Automatic
  Differentiation}}.
\newblock \emph{arXiv:2104.00428 [cs, stat]}, May 2021.

\bibitem[Wengert(1964)]{wengert_simple_1964}
R.~E. Wengert.
\newblock A simple automatic derivative evaluation program.
\newblock \emph{Communications of the ACM}, 7\penalty0 (8):\penalty0 463--464,
  August 1964.
\newblock ISSN 0001-0782.
\newblock \doi{10.1145/355586.364791}.

\bibitem[Williams(1992)]{williams_simple_1992}
Ronald~J. Williams.
\newblock Simple statistical gradient-following algorithms for connectionist
  reinforcement learning.
\newblock \emph{Machine Learning}, 8\penalty0 (3):\penalty0 229--256, May 1992.
\newblock ISSN 1573-0565.
\newblock \doi{10.1007/BF00992696}.

\bibitem[Zhu et~al.(2020)Zhu, Murphy, and Jonschkowski]{zhu_towards_2020}
Michael Zhu, Kevin Murphy, and Rico Jonschkowski.
\newblock Towards {{Differentiable Resampling}}.
\newblock \emph{arXiv:2004.11938 [cs, stat]}, April 2020.

\end{thebibliography}

\clearpage

\onecolumn
\aistatstitle{
Supplementary Materials
}

\section{Stop-gradient tutorial}

The key equations for manipulating expressions involving stop-gradient are
\begin{align}
    \evaluate{f(E_1, \dots, E_n)} &= f(\evaluate{E_1}, \dots, \evaluate{E_2}) \\
    \evaluate{\stopgrad{E}} &= \evaluate{E} \\
    \grad{\stopgrad{E}} &= 0 \\
    \evaluate{\grad{E}} &= \grad{E} \quad \text{if} \ \bot \notin E  .
\end{align}
Crucially, it is generally the case that $\evaluate{\grad E} \neq \grad \evaluate{E}$, that is different expressions involving stop-gradient can always evaluate to the same value on the forward pass, but produce different gradients. For example,
\begin{align}
    &\grad_x \evaluate{x + \stopgrad{x}^2 + x \stopgrad{x}^2} =
    \grad_x \left( \evaluate{x} + \evaluate{\stopgrad{x}^2} + \evaluate{x \stopgrad{x}^2} \right) =
    \grad_x \left( x + \evaluate{\stopgrad{x}}^2 + \evaluate{x} \evaluate{\stopgrad{x}^2} \right) = \\
    &\grad_x \left( x + x^2 + x x^2 \right) =
    \grad_x \left (x + x^2 + x^3 \right) =
    1 + 2 x + 3 x^2.
\end{align}
On the other hand,
\begin{align}
    &\evaluate{\grad_x \left(x + \stopgrad{x}^2 + x \stopgrad{x}^2 \right)} =
    \evaluate{1 + 2 \stopgrad{x} \grad_x \stopgrad{x} + \left( \grad x \right) \stopgrad{x}^2 + x \left( \grad_x \stopgrad{x}^2 \right)} = \\
    &\evaluate{1 + 2 \stopgrad{x} 0 + \stopgrad{x}^2 + x 0} =
    \evaluate{1 + \stopgrad{x}^2} =
    \evaluate{1} + \evaluate{\stopgrad{x}^2} =
    1 + \evaluate{\stopgrad{x}}^2 =
    1 + x^2 .
\end{align}
The chain rule for standard functions works normally, as does repeated differentiation. Moreover, it is possible to interleave differentiation with evaluation. For example,
\begin{align}
    &\grad_x \evaluate{\grad_x \exp \left( \stopgrad{x} + \log x \right)} =
    \grad_x \evaluate{\exp \left( \stopgrad{x} + \log x \right) \grad_x\left( \stopgrad{x} + \log x \right)} = \\
    &\grad_x \evaluate{\exp \left( \stopgrad{x} + \log x \right) \left(\grad_x \stopgrad{x} + \grad_x \log x \right)} =
    \grad_x \evaluate{\exp \left( \stopgrad{x} + \log x \right) \left(0 + \frac{1}{x} \right)} = \\
    &\grad_x \evaluate{\exp \left( \stopgrad{x} + \log x \right) \frac{1}{x}} =
    \grad_x \left( \evaluate{\exp \left( \stopgrad{x} + \log x \right)} \evaluate{\frac{1}{x}} \right) = \\
    &\grad_x \left( \exp \evaluate{\left( \stopgrad{x} + \log x \right)} \frac{1}{x} \right) =
    \grad_x \left( \exp \left( x + \log x \right) \frac{1}{x} \right) = \\
    &\frac{1}{x} \grad_x \exp \left( x + \log x \right) + \exp \left( x + \log x \right) \grad_x \frac{1}{x} = \\
    &\frac{1}{x} \exp \left( x + \log x \right) \left( 1 + \frac{1}{x} \right) - \frac{1}{x^2} \exp \left( x + \log x \right) =
    \frac{1}{x} \exp \left( x + \log x \right) .
\end{align}
All the usual properties of standard functions hold, for example both addition and multiplication are associative and commutative, and the logarithm of a product is a sum of logarithms. We do not consider limits or integrals of expressions involving stop-gradient, leaving them undefined.

The key identities showing up repeatedly in this paper are
\begin{align}
    \evaluate{\grad \frac{E}{\stopgrad{E}}} = \evaluate{\frac{\grad E}{\stopgrad{E}}} =
    \frac{\grad E}{E} = \log E ,
\end{align}
and
\begin{align}
    \evaluate{\grad \frac{\stopgrad{E}}{E}} = \evaluate{- \frac{\stopgrad{E}}{E^2} \grad E} =
    - \evaluate{\frac{\stopgrad{E}}{E^2}} \evaluate{\grad E} =
    - \frac{E}{E^2} \grad E = - \frac{\grad E}{E} = - \log E ,
\end{align}
both of which hold assuming that $E$ does not contain any stop-gradient terms.

Equipped with those, we can see what DiCE evaluates to
\begin{align}
    &\evaluate{\grad_{\theta} \left( \frac{p_{\theta}(x)}{\stopgrad{p_{\theta}(x)}} f_{\theta}(x) \right)} =
    \evaluate{\grad_{\theta} \left( \frac{p_{\theta}(x)}{\stopgrad{p_{\theta}(x)}} \right) f_{\theta}(x) +
            \frac{p_{\theta}(x)}{\stopgrad{p_{\theta}(x)}} \grad_{\theta} f_{\theta}(x)} = \\
    &f_{\theta}(x) \evaluate{\frac{\grad_{\theta} p_{\theta}(x)}{\stopgrad{p_{\theta}(x)}}} + \evaluate{\frac{p_{\theta}(x)}{\stopgrad{p_{\theta}(x)}}} \grad_{\theta} f_{\theta}(x) =
    f_{\theta}(x) \grad_{\theta} \log p_{\theta}(x) + \grad_{\theta} f_{\theta}(x) .
\end{align}

\section{Detailed derivations}

\subsection{Fisher's identity with stop-gradient}

The IWAE \citep{burda_importance_2016} objective is
\begin{align}
    \mathcal{L}_{\text{IWAE}} =
    \expect{\log \hat{Z}}{x^i \sim q_{\phi}} =
    \expect{\log \frac{1}{N} \sum_{i=1}^N w^i}{x^i \sim q_{\phi}(x)} =
    \expect{\log \frac{1}{N} \sum_{i=1}^N \frac{p_{\theta}(x^i, y)}{q_{\phi}(x^i)}}{x^i \sim q_{\phi}(x)} .
\end{align}
Automatic differentiation then computes gradients with respect to the model parameters $\theta$ as follows
\begin{align}
    \grad_{\theta} \mathcal{L}_{\text{IWAE}} &=
    \expect{\grad_{\theta} \log \frac{1}{N} \sum_{i=1}^N w^i}{x^i \sim q_{\phi}(x)} =
    \expect{\frac{\grad_{\theta} \sum_{i=1}^N w^i}{\sum_{i=1}^N w^i}}{x^i \sim q_{\phi}(x)} \\
    &= \expect{\frac{\sum_{i=1}^N w^i \grad_{\theta} \log w^i}{\sum_{i=1}^N w^i}}{x^i \sim q_{\phi}(x)} =
    \expect{\sum_{i=1}^N \normalized{w}^i \grad_{\theta} \log w^i}{x^i \sim q_{\phi}(x)} \\
    &= \expect{\sum_{i=1}^N \normalized{w}^i \grad_{\theta} \log p_{\theta}(x^i, y)}{x^i \sim q_{\phi}(x)} .
\end{align}

This is exactly the same estimator we would obtain by using Fisher's identity with a self-normalizing importance sampler to approximate the posterior
\begin{align}
    \grad_{\theta} \log p_{\theta}(y) &= \expect{\grad_{\theta} \log p_{\theta}(x, y)}{x \sim p_{\theta}(x | y)} =
    \expect{\frac{p_{\theta}(x | y)}{q_{\phi}(x)} \grad_{\theta} \log p_{\theta}(x, y)}{x \sim q_{\phi}(x)} \\
    &= \expect{\frac{w}{p_{\theta}(y)} \grad_{\theta} \log p_{\theta}(x, y)}{x \sim q_{\phi}(x)} =
    \expect{\frac{1}{N} \sum_{i=1}^N \frac{w^i}{p_{\theta}(y)} \grad_{\theta} \log p_{\theta}(x^i, y)}{x^i \sim q_{\phi}(x)} \\
    &\approx \expect{\sum_{i=1}^N \normalized{w}^i \grad_{\theta} \log p_{\theta}(x^i, y)}{x^i \sim q_{\phi}(x)} \label{eq:fisher-snis} .
\end{align}

However, the two methods are no longer equivalent if the proposal distribution $q_{
\theta}$ also depends on model parameters, such as when using the prior $p_{\theta}(x)$. Furthermore, suppose it is not reparameterizable, so we apply DiCE to obtain the gradient estimators
\begin{align}
    & \grad_{\theta} \expect{\log \frac{1}{N} \sum_{i=1}^N \frac{p_{\theta}(x^i, y)}{q_{\theta}(x^i)}}{x^i \sim q_{\theta}(x)} =
    \expect{\evaluate{\grad_{\theta} \left(\prod_{i=1}^N \frac{p_{\theta}(x^i)}{\stopgrad{q_{\theta}(x^i)}} \log \frac{1}{N} \sum_{i=1}^N \frac{p_{\theta}(x^i, y)}{q_{\theta}(x^i)} \right)}}{x^i \sim q_{\theta}(x)} = \\
    & \expect{\evaluate{\left( \grad_{\theta} \prod_{i=1}^N \frac{p_{\theta}(x^i)}{\stopgrad{q_{\theta}(x^i)}} \right) 
                    \log \frac{1}{N} \sum_{i=1}^N \frac{p_{\theta}(x^i, y)}{q_{\theta}(x^i)} +
              \left(\prod_{i=1}^N \frac{p_{\theta}(x^i)}{\stopgrad{q_{\theta}(x^i)}}\right)
                    \sum_{i=1}^N \normalized{w}^i \grad_{\theta} \log \frac{p_{\theta}(x^i, y)}{q_{\theta}(x^i)}}
              }{x^i \sim p_{\theta}(x))} = \\
    & \expect{\log \hat{Z} \ \sum_{i=1}^N \grad_{\theta} \log p_{\theta}(x^i) +
              \sum_{i=1}^N \normalized{w}^i \grad_{\theta} \log \frac{p_{\theta}(x^i, y)}{q_{\theta}(x^i)}
              }{x^i \sim q_{\theta}(x)} .
\end{align}

\subsection{Score function estimation with DPF}

Let $\Lineage_t^i = \frac{p_{\theta}(\lineage{x}_{1:t}^i, y_{1:t})}{q_{\phi}(\lineage{x}_{1:t}^i)}$, which can be thought of as the importance weight accumulated along the full ancestral lineage of a particle $i$ at time $t$ and satisfies the recursion $\Lineage_t^i = N w_t^i \Lineage_{t-1}^{a_t^i}$

First, we derive a formula for the modified weights at given time steps
\begin{align}
    \sgw_t^i &= \frac{p_{\theta}(x_t^i, y_t | x_{t-1}^{a_t^i})}{q_{\phi}(x_t^i | x_{t-1}^{a_t^i})} \ \resampled{\sgw}_t^i
    = w_t^i \frac{\normalized{\sgw}_{t-1}^{a_t^i}}{\stopgrad{\normalized{\sgw}_{t-1}^{a_t^i}}}
    = w_t^i \frac{\sgw_{t-1}^{a_t^i}}{\stopgrad{\sgw_{t-1}^{a_t^i}}} \frac{\stopgrad{\Sgw_{t-1}}}{\Sgw_{t-1}} \label{eq:sgw-base} .
\end{align}
This allows us to derive and solve the following recursive equation
\begin{align}
    \frac{\sgw_t^i}{\stopgrad{\sgw_t^i}} = \frac{w_t^i}{\stopgrad{w_t^i}} \frac{\sgw_{t-1}^{a_t^i}}{\stopgrad{\sgw_{t-1}^{a_t^i}}} \frac{\stopgrad{\Sgw_{t-1}}}{\Sgw_{t-1}} = \frac{\Lineage_t^i}{\stopgrad{\Lineage_t^i}} \prod_{j=1}^{t-1} \frac{\stopgrad{\Sgw_j}}{\Sgw_j} .
\end{align}
Plugging this back into Eq. \ref{eq:sgw-base}, and noting that $\stopgrad{\sgw_t^i} = \stopgrad{w_t^i}$, we obtain the following formula
\begin{align}
    \sgw_t^i =
    w_t^i \frac{\Lineage_{t-1}^{a_t^i}}{\stopgrad{\Lineage_{t-1}^{a_t^i}}} \prod_{j=1}^{t-1} \frac{\stopgrad{\Sgw_j}}{\Sgw_j} =
    \stopgrad{\sgw_t^i} \frac{\Lineage_{t}^{i}}{\stopgrad{\Lineage_{t}^{i}}} \prod_{j=1}^{t-1} \frac{\stopgrad{\Sgw_j}}{\Sgw_j} \label{eq:sgw-final}
\end{align}

With that, we can derive an alternative form of $\hat{Z}_{\text{DPF}}$
\begin{align}
    \hat{Z}_{\text{DPF}} &= \prod_{t=1}^T \Sgw_t = \left( \prod_{t=1}^{T-1} \Sgw_t \right) \sum_{i=1}^N \sgw_T^i =
    \left( \prod_{t=1}^{T-1} \stopgrad{\Sgw_t} \right) \sum_{i=1}^N \stopgrad{\sgw_T^i} \frac{\Lineage_T^i}{\stopgrad{\Lineage_T^{i}}} \\
    &= \left( \prod_{t=1}^{T-1} \stopgrad{\Sgw_t} \right) \sum_{i=1}^N \stopgrad{\sgw_T^i} \frac{\Lineage_T^i}{\stopgrad{\Lineage_T^{i}}} \frac{\stopgrad{\Sgw_T}}{\sum_{i=1}^N \stopgrad{\sgw_T^i}} \\
    &= \left( \prod_{t=1}^{T} \stopgrad{\Sgw_t} \right)  \frac{\sum_{i=1}^N \stopgrad{\sgw_T^i} \frac{\Lineage_T^i}{\stopgrad{\Lineage_T^{i}}}}{\sum_{i=1}^N \stopgrad{\sgw_T^i}} =
    \stopgrad{\hat{Z}_{\text{DPF}}} \sum_{i=1}^N \stopgrad{\normalized{\sgw}_T^i} \frac{\Lineage_T^i}{\stopgrad{\Lineage_T^{i}}} \label{eq:z-dpf} .
\end{align}

This form allows us to evaluate the gradient computed by automatic differentiation
\begin{align}
    \evaluate{\nabla_{\theta} \hat{Z}_{\text{DPF}}} =
    \evaluate{\hat{Z}_{\text{DPF}}} \sum_{i=1}^N \evaluate{\normalized{\sgw}_T^i} \evaluate{\frac{\nabla_{\theta} \Lineage_T^i}{\stopgrad{\Lineage_T^{i}}}} =
    \evaluate{\hat{Z}_{\text{DPF}}} \sum_{i=1}^N \evaluate{\normalized{\sgw}_T^i} \nabla_{\theta} \log \Lineage_T^i
    \label{eq:supp-z-grad-dpf} .
\end{align}

The estimator for the score is obtained using the chain rule for the logarithm
\begin{align}
    \evaluate{\nabla_{\theta} \log \hat{Z}_{\text{DPF}}} = \frac{\evaluate{\nabla_{\theta} \hat{Z}_{\text{DPF}}}}{\evaluate{\hat{Z}_{\text{DPF}}}} =
    \sum_{i=1}^N \evaluate{\normalized{\sgw}_T^i} \nabla_{\theta} \log \Lineage_T^i =
    \sum_{i=1}^N \normalized{w}_T^i \nabla_{\theta} \log p_{\theta}(\lineage{x}_{1:T}^i, y_{1:T}) . \label{eq:supp-dpf-score}
\end{align}

\subsection{Estimators of expectations}

Consider a function $f_{\theta} : \mathcal{X}^T \rightarrow \mathcal{R}$, differentiable in $\theta$ for all $x_{1:T}$. A particle filter yields a consistent estimator for its expectation under the posterior
\begin{align}
    \lim_{N \rightarrow \infty} \sum_{i=1}^N \normalized{w}_T^i f_{\theta}(\lineage{x}_{1:T}^i) = \expect{f_{\theta}(x_{1:T})}{x \sim p_{\theta}(x_{1:T} | y_{1:T})} .
\end{align}
Differentiating through the usual PF implementation, we obtain the following gradient estimator
\begin{align}
    & \grad_{\theta} \expect{f_{\theta}(x_{1:T})}{x \sim p_{\theta}(x_{1:T} | y_{1:T})} \approx
    \sum_{i=1}^N \normalized{w}_T^i \left( f_{\theta}(\lineage{x}_{1:T}^i) \grad_{\theta} \log \normalized{w}_T^i + \grad_{\theta} f_{\theta}(\lineage{x}_{1:T}^i) \right) = \\
    & - \grad_{\theta} \log W_T \sum_{i=1}^N \normalized{w}_T^i f_{\theta}(\lineage{x}_{1:T}^i) +
        \sum_{i=1}^N \normalized{w}_T^i \left( f_{\theta}(\lineage{x}_{1:T}^i) \grad_{\theta} \log w_T^i + \grad_{\theta} f_{\theta}(\lineage{x}_{1:T}^i) \right) = \\
    & - \grad_{\theta} \log W_T \normalized{f}_{\theta} +
        \sum_{i=1}^N \normalized{w}_T^i \left( f_{\theta}(\lineage{x}_{1:T}^i) \grad_{\theta} \log w_T^i + \grad_{\theta} f_{\theta}(\lineage{x}_{1:T}^i) \right) = \\
    & - \normalized{f}_{\theta} \sum_{i=1}^N \normalized{w}_T^i \grad_{\theta} \log w_T^i +
        \sum_{i=1}^N \normalized{w}_T^i \left( f_{\theta}(\lineage{x}_{1:T}^i) \grad_{\theta} \log w_T^i + \grad_{\theta} f_{\theta}(\lineage{x}_{1:T}^i) \right) = \\
    & \sum_{i=1}^N \normalized{w}_T^i \left( \left( f_{\theta}(\lineage{x}_{1:T}^i) - \normalized{f}_{\theta} \right) \grad_{\theta} \log p_{\theta}(x_T, y_T | x_{T-1}^{a_T^i}) + \grad_{\theta} f_{\theta}(\lineage{x}_{1:T}^i) \right) \label{eq:supp-expect-pf} .
\end{align}
Furthermore, employing DiCE on ancestral variables results in the following correction
\begin{align}
    & \normalized{f}_{\theta} \sum_{t=1}^T \sum_{i=1}^N \grad_{\theta} \log \normalized{w}_{t-1}^{a_t^i} =
    \normalized{f}_{\theta} \sum_{t=1}^T \sum_{i=1}^N \grad_{\theta} \left( \log w_{t-1}^{a_t^i} - \log W_{t-1} \right) = \\
    & \normalized{f}_{\theta} \sum_{t=1}^T \sum_{i=1}^N \left( \grad_{\theta} \log w_{t-1}^{a_t^i} - \sum_{j=1}^N \normalized{w}_{t-1}^i \grad_{\theta} \log w_{t-1}^i \right) .
\end{align}

When differentiating through the implementation of the same estimator obtained by DPF, we obtain the following gradient estimator, using Eq. \ref{eq:sgw-final} and \ref{eq:supp-dpf-score}
\begin{align}
    & \grad_{\theta} \expect{f_{\theta}(x_{1:T})}{x \sim p_{\theta}(x_{1:T} | y_{1:T})} \approx
    \evaluate{\grad_{\theta} \sum_{i=1}^N \normalized{v}_T^i f_{\theta}(\lineage{x}_{1:T}^i)} = \\
    & \evaluate{\grad_{\theta} \sum_{i=1}^N \stopgrad{\normalized{v}_T^i} \frac{L_T^i}{\stopgrad{L_T^i}} \frac{\stopgrad{\hat{Z}_{\text{DPF}}}}{\hat{Z}_{\text{DPF}}} f_{\theta}(\lineage{x}_{1:T}^i)} = \\
    & \sum_{i=1}^N \normalized{w}_T^i \left( f_{\theta}(\lineage{x}_{1:T}^i) \grad_{\theta} \log L_T^i + \grad_{\theta} f_{\theta}(\lineage{x}_{1:T}^i) \right) - \normalized{f}_{\theta} \evaluate{\grad_{\theta} \log \hat{Z}_{\text{DPF}}} = \\
    & \sum_{i=1}^N \normalized{w}_T^i \left( \left( f_{\theta}(\lineage{x}_{1:T}^i) - \normalized{f}_{\theta} \right) \grad_{\theta} \log p_{\theta}(\lineage{x}_{1:T}^i, y_{1:T}) + \grad_{\theta} f_{\theta}(\lineage{x}_{1:T}^i) \right) \label{eq:supp-expect-dpf} .
\end{align}

For the expectations under the unnormalized posterior, differentiating through DPF produces the following gradient estimator
\begin{align}
    & \grad_{\theta} \expect{Z f_{\theta}(x_{1:T})}{x_{1:T} \sim p_{\theta}(x_{1:T} | y_{1:T})} \approx
    \evaluate{\grad_{\theta} \hat{Z}_{\text{DPF}} \sum_{i=1}^N \normalized{v}_T^i f_{\theta}(\lineage{x}_{1:T}^i)} = \\
    & \evaluate{\grad_{\theta} \hat{Z}_{\text{DPF}} \sum_{i=1}^N \stopgrad{\normalized{v}_T^i} \frac{L_T^i}{\stopgrad{L_T^i}} \frac{\stopgrad{\hat{Z}_{\text{DPF}}}}{\hat{Z}_{\text{DPF}}} f_{\theta}(\lineage{x}_{1:T}^i)} = \\
    & \hat{Z} \sum_{i=1}^N \normalized{w}_T^i \left( f_{\theta}(\lineage{x}_{1:T}^i) \grad_{\theta} \log L_T^i + \grad_{\theta} f_{\theta}(\lineage{x}_{1:T}^i) \right) = \\
    & \hat{Z} \sum_{i=1}^N \normalized{w}_T^i \left( f_{\theta}(\lineage{x}_{1:T}^i) \grad_{\theta} \log p_{\theta}(\lineage{x}_{1:T}^i, y_{1:T}) + \grad_{\theta} f_{\theta}(\lineage{x}_{1:T}^i) \right) . \label{eq:supp-dpf-unbiased}
\end{align}

We compare it with the estimator obtained with PF and DiCE, which is also unbiased
\begin{align}
    & \grad_{\theta} \expect{Z f_{\theta}(x_{1:T})}{x_{1:T} \sim p_{\theta}(x_{1:T} | y_{1:T})} \approx \\
    & \normalized{f}_{\theta} \grad_{\theta} \hat{Z}_{\text{PF}} +
        \hat{Z}_{\text{PF}} \grad_{\theta} \sum_{i=1}^N \normalized{w}_T^i f_{\theta}(\lineage{x}_{1:T}^i) +
        \hat{Z}_{\text{PF}} \normalized{f}_{\theta} \sum_{t=1}^T \sum_{i=1}^N \grad_{\theta} \log \normalized{w}_{t-1}^{a_t^i} = \\
    & \hat{Z} \normalized{f}_{\theta} \sum_{t=1}^T \grad_{\theta} \log W_t +
        \hat{Z} \sum_{i=1}^N \normalized{w}_T^i \left( \grad_{\theta} f_{\theta}(\lineage{x}_{1:T}^i) +
            \left( f_{\theta}(\lineage{x}_{1:T}^i) - \normalized{f}_{\theta} \right) \grad_{\theta} \log w_T^i \right) + \\
    & \quad \hat{Z} \normalized{f}_{\theta} \sum_{t=1}^T \sum_{i=1}^N \left( \grad_{\theta} \log w_{t-1}^{a_t^i} - \sum_{j=1}^N \normalized{w}_{t-1}^i \grad_{\theta} \log w_{t-1}^i \right) = \\
    & \hat{Z} \normalized{f}_{\theta} \sum_{t=1}^T \grad_{\theta} \log W_t +
        \hat{Z} \sum_{i=1}^N \normalized{w}_T^i \left( \grad_{\theta} f_{\theta}(\lineage{x}_{1:T}^i) +
            \left( f_{\theta}(\lineage{x}_{1:T}^i) - \normalized{f}_{\theta} \right) \grad_{\theta} \log w_T^i \right) + \\
    & \quad \hat{Z} \normalized{f}_{\theta} \sum_{t=1}^T \sum_{i=1}^N \left( \grad_{\theta} \log w_{t-1}^{a_t^i} - \sum_{j=1}^N \normalized{w}_{t-1}^i \grad_{\theta} \log w_{t-1}^i \right) = \\
    & \hat{Z} \sum_{i=1}^N \normalized{w}_T^i \left( \grad_{\theta} f_{\theta}(\lineage{x}_{1:T}^i) +
            \left( f_{\theta}(\lineage{x}_{1:T}^i) - \normalized{f}_{\theta} \right) \grad_{\theta} \log w_T^i \right) + \\
    & \quad \hat{Z} \normalized{f}_{\theta} \sum_{t=1}^T \sum_{i=1}^N \left( \grad_{\theta} \log w_{t-1}^{a_t^i} + \normalized{w}_{t}^i \grad_{\theta} \log w_t^i - \sum_{j=1}^N \normalized{w}_{t-1}^i \grad_{\theta} \log w_{t-1}^i \right) .
\end{align}

\subsection{Gradients for proposal learning}

The derivation leading up to Eq. \ref{eq:supp-z-grad-dpf} is valid for $\grad_{\phi}$. Then we compute the gradient of $\log L_T^i$ assuming reparameterization
\begin{align}
    \evaluate{\grad_{\phi} \log \hat{Z}_{\text{DPF}}} =
    \frac{\evaluate{\nabla_{\phi} \hat{Z}_{\text{DPF}}}}{\evaluate{\hat{Z}_{\text{DPF}}}} =
    \sum_{i=1}^N \evaluate{\normalized{\sgw}_T^i} \nabla_{\phi} \log \Lineage_T^i =
    \sum_{i=1}^N \normalized{w}_T^i \nabla_{\phi} \log \frac{p_{\theta}(\lineage{h}_{\phi}(\lineage{\epsilon}_{1:T}^i), y_{1:T})}{q_{\phi}(\lineage{h}_{\phi}(\lineage{\epsilon}_{1:T}^i))} .
\end{align}

\subsection{Score estimation with marginal particle filters}

\citet[Eq. 20]{poyiadjis_particle_2011} define their score function estimator as
\begin{align}
    \grad_{\theta} \log p_{\theta}(y_{1:T}) \approx \sum_{i=1}^N \normalized{w}_T^i \bar{\alpha}_T^i , \label{eq:supp-poyiadjis-quadratic}
\end{align}
using the marginal particle filter and the following recursive definition of $\alpha$
\begin{align}
    \bar{\alpha}_{t+1}^i &= \frac{\sum_{j=1}^N \normalized{w}_{t+1}^j p_{\theta}(x_{t+1}^i | x_t^j) \left( 
        \bar{\alpha}_t^j + \grad_{\theta} \log p_{\theta}(x_{t+1}^i, y_{t+1}) \right)}
        {\sum_{j=1}^N \normalized{w}_{t+1}^j p_{\theta}(x_{t+1}^i | x_t^j)} \\
    &=\frac{\sum_{j=1}^N \normalized{w}_{t+1}^j p_{\theta}(x_{t+1}^i, y_{t+1} | x_t^j) \left( 
        \bar{\alpha}_t^j + \grad_{\theta} \log p_{\theta}(x_{t+1}^i, y_{t+1}) \right)}
        {\sum_{j=1}^N \normalized{w}_{t+1}^j p_{\theta}(x_{t+1}^i, y_{t+1} | x_t^j)} . \label{eq:supp-alpha-quadratic}
\end{align}

We show that the same result is obtained by differentiating $\log \hat{Z}_{\text{DPF2}}$.
To find the corresponding recursion, we expand the definition of $v_{t}^i$ in the following formula
\begin{align}
    & \evaluate{\grad_{\theta} \frac{v_{t}^i}{\stopgrad{v_{t}^i}} \prod_{k=1}^{t-1} \frac{V_{k}}{\stopgrad{V_{k}}}} = \\
    & \evaluate{\grad_{\theta} \frac{\sum_{i=1}^N \normalized{\sgw}_{t-1}^i \ p_{\theta}(x_{t}^n, y_{t} | x_{t-1}^i)}
                                   {\stopgrad{\sum_{i=1}^N \normalized{\sgw}_{t-1}^i \ p_{\theta}(x_{t}^n, y_{t} | x_{t-1}^i)}}
                             \frac{\stopgrad{\sum_{i=1}^N \stopgrad{\normalized{\sgw}_{t-1}^i} \ q_{\phi}(x_{t}^n | x_{t-1}^i, y_{t})}}
                                  {\sum_{i=1}^N \stopgrad{\normalized{\sgw}_{t-1}^i} \ q_{\phi}(x_{t}^n | x_{t-1}^i, y_{t})}
                             \prod_{k=1}^{t-1} \frac{V_{k}}{\stopgrad{V_{k}}}
            } = \\
    & \evaluate{\grad_{\theta} \frac{\sum_{i=1}^N \stopgrad{\normalized{\sgw}_{t-1}^i} \ p_{\theta}(x_{t}^n, y_{t} | x_{t-1}^i)
                \frac{\normalized{\sgw}_{t-1}^i}{\stopgrad{\normalized{\sgw}_{t-1}^i}} \prod_{k=1}^{t-1} \frac{V_{k}}{\stopgrad{V_{k}}}
                }
                                   {\stopgrad{\sum_{i=1}^N \normalized{\sgw}_{t-1}^i \ p_{\theta}(x_{t}^n, y_{t} | x_{t-1}^i)}}
            } = \\
    & \frac{\sum_{i=1}^N \normalized{w}_{t-1}^i \evaluate{\grad_{\theta} \left( p_{\theta}(x_{t}^n, y_{t} | x_{t-1}^i)
                \frac{\sgw_{t-1}^i}{\stopgrad{\sgw_{t-1}^i}} \prod_{k=1}^{t-2} \frac{V_{k}}{\stopgrad{V_{k}}} \right) }}
           {\sum_{i=1}^N \normalized{w}_{t-1}^i p_{\theta}(x_{t}^n, y_{t} | x_{t-1}^i)} = \\
    & \frac{\sum_{i=1}^N \normalized{w}_{t-1}^i \left(\grad_{\theta} p_{\theta}(x_{t}^n, y_{t} | x_{t-1}^i) + p_{\theta}(x_{t}^n, y_{t} | x_{t-1}^i) \evaluate{\grad_{\theta} \frac{\sgw_{t-1}^i}{\stopgrad{\sgw_{t-1}^i}} \prod_{k=1}^{t-2} \frac{V_{k}}{\stopgrad{V_{k}}}}
                 \right)}
           {\sum_{i=1}^N \normalized{w}_{t-1}^i p_{\theta}(x_{t}^n, y_{t} | x_{t-1}^i)} = \\
    & \frac{\sum_{i=1}^N \normalized{w}_{t-1}^i p_{\theta}(x_{t}^n, y_{t} | x_{t-1}^i) \left(\grad_{\theta} \log p_{\theta}(x_{t}^n, y_{t} | x_{t-1}^i) + \evaluate{\grad_{\theta} \frac{\sgw_{t-1}^i}{\stopgrad{\sgw_{t-1}^i}} \prod_{k=1}^{t-2} \frac{V_{k}}{\stopgrad{V_{k}}}}
                 \right)}
           {\sum_{i=1}^N \normalized{w}_{t-1}^i p_{\theta}(x_{t}^n, y_{t} | x_{t-1}^i)} .
\end{align}
This is exactly the same recursive relationship as defined by Eq. \ref{eq:supp-alpha-quadratic}, so we have $ \evaluate{\grad_{\theta} \frac{v_{t}^i}{\stopgrad{v_{t}^i}} \prod_{k=1}^{t-1} \frac{V_{k}}{\stopgrad{V_{k}}}} = \bar{\alpha}_t^i$.
Differentiating through DPF2 produces the following expression
\begin{align}
    & \evaluate{\grad_{\theta} \log \hat{Z}_{\text{DPF2}}} = \frac{\evaluate{\grad_{\theta} \hat{Z}_{\text{DPF2}}}}{\hat{Z}} =
    \frac{1}{\hat{Z}} \evaluate{\grad_{\theta} \prod_{t=1}^T V_t} =
    \frac{1}{\hat{Z}} \evaluate{\grad_{\theta} \sum_{i=1}^N v_T^i \prod_{t=1}^{T-1} V_t} = \\
    & \frac{1}{\hat{Z}} \evaluate{\grad_{\theta} \sum_{i=1}^N \stopgrad{\normalized{v}_T^i V_T} \frac{v_T^i}{\stopgrad{v_T^i}} \prod_{t=1}^{T-1} V_t} =
    \frac{1}{\hat{Z}} \evaluate{\grad_{\theta} \stopgrad{\hat{Z}} \sum_{i=1}^N \stopgrad{\normalized{v}_T^i} \frac{v_T^i}{\stopgrad{v_T^i}} \prod_{t=1}^{T-1} \frac{V_t}{\stopgrad{V_t}}} = \\
    & \sum_{i=1}^N \normalized{w}_T^i \evaluate{\grad_{\theta} \frac{v_T^i}{\stopgrad{v_T^i}} \prod_{t=1}^{T-1} \frac{V_t}{\stopgrad{V_t}}} =
    \sum_{i=1}^N \normalized{w}_T^i \bar{\alpha}_T^i ,
\end{align}
recovering Eq. \ref{eq:supp-poyiadjis-quadratic}.

\subsection{Hessian of log-marginal likelihood with DPF}

While in most applications we are interested in first-order derivatives, we can apply automatic differentiation repeatedly to obtain derivatives of higher orders. \citet{poyiadjis_particle_2011} show that the Hessian of the log-marginal likelihood can be approximated with a particle filter using Louis' identity
\begin{align}
    &\grad_{\theta}^2 \log p_{\theta}(y_{1:T}) =
        - \grad_{\theta} \log p_{\theta}(y_{1:T}) \grad_{\theta} \log p_{\theta}(y_{1:T})^T + \\
    &    \expect{\grad_{\theta}^2 \log p_{\theta}(x_{1:T}, y_{1:T}) +
            \grad_{\theta} \log p_{\theta}(x_{1:T}, y_{1:T}) \grad_{\theta} \log p_{\theta}(x_{1:T}, y_{1:T})^T}
            {x_{1:T} \sim p_{\theta}(x_{1:T} | y_{1:T})} \approx \\
    & - \sum_{i=1}^N \sum_{j=1}^N \normalized{w}_T^i \normalized{w}_T^j \grad_{\theta} \log p_{\theta}(\lineage{x}_{1:T}^i, y_{1:T}) \grad_{\theta} \log p_{\theta}(\lineage{x}_{1:T}^i, y_{1:T})^T + \\
    & \sum_{i=1}^N \normalized{w}_T^i \left(\grad_{\theta}^2 \log p_{\theta}(\lineage{x}_{1:T}^i, y_{1:T}) +
            \grad_{\theta} \log p_{\theta}(\lineage{x}_{1:T}^i,, y_{1:T}) \grad_{\theta} \log p_{\theta}(\lineage{x}_{1:T}^i,, y_{1:T})^T \right) \label{eq:supp-poyiadjis-hessian},
\end{align}
where superscript $T$ denotes transpose.

We can derive the estimator produced by applying automatic differentiation twice to DPF using Eq. \ref{eq:z-dpf}
\begin{align}
    & \evaluate{\grad_{\theta}^2 \log \hat{Z}_{\text{DPF}}} =
    \evaluate{\grad_{\theta} \frac{\grad_{\theta} \hat{Z}_{\text{DPF}}}{\hat{Z}_{\text{DPF}}}} =
    \evaluate{\grad_{\theta} \left( \frac{\stopgrad{\hat{Z}_{\text{DPF}}}}{\hat{Z}_{\text{DPF}}} \sum_{i=1}^N \stopgrad{\normalized{\sgw}_T^i} \frac{\grad_{\theta} \Lineage_T^i}{\stopgrad{\Lineage_T^{i}}} \right)} = \\
    & - \evaluate{\grad_{\theta} \log \hat{Z}_{\text{DPF}}} \sum_{i=1}^N \normalized{w}_T^i \grad_{\theta} \log L_T^i +
    \sum_{i=1}^N \normalized{w}_T^i \evaluate{\frac{\grad_{\theta}^2 L_T^i}{\stopgrad{L_T^i}}} = \\
    & - \sum_{i=1}^N \sum_{j=1}^N \normalized{w}_T^i \normalized{w}_T^j \grad_{\theta} \log L_T^i (\grad_{\theta} \log L_T^j)^T +
    \sum_{i=1}^N \normalized{w}_T^i \evaluate{\grad_{\theta} \left( \frac{L_T^i}{\stopgrad{L_T^i}} (\grad_{\theta} \log L_T^i)^T \right)} = \\
    & - \sum_{i=1}^N \sum_{j=1}^N \normalized{w}_T^i \normalized{w}_T^j \grad_{\theta} \log L_T^i (\grad_{\theta} \log L_T^j)^T +
    \sum_{i=1}^N \normalized{w}_T^i \left( \grad_{\theta} \log L_T^i (\grad_{\theta} \log L_T^i)^T + \grad_{\theta}^2 \log L_T^i  \right) \label{eq:dpf-hessian} ,
\end{align}
which is exactly the same as Eq. \ref{eq:supp-poyiadjis-hessian}. Furthermore, Hessian-vector products can be computed with reduced memory requirements in the usual fashion.

\section{Proofs}

Consistency of the DPF estimator for the expectation under the posterior (Eq. \ref{eq:supp-expect-dpf}) can be shown as follows
\begin{align}
    & \lim_{N \rightarrow \infty} \sum_{i=1}^N \normalized{w}_T^i \left( \left( f_{\theta}(\lineage{x}_{1:T}^i) - \normalized{f}_{\theta} \right) \grad_{\theta} \log p_{\theta}(\lineage{x}_{1:T}^i, y_{1:T}) + \grad_{\theta} f_{\theta}(\lineage{x}_{1:T}^i) \right) = \\
    & \lim_{N \rightarrow \infty} \sum_{i=1}^N \normalized{w}_T^i \left( \left( f_{\theta}(\lineage{x}_{1:T}^i) - \normalized{f}_{\theta} \right) \grad_{\theta} \log p_{\theta}(\lineage{x}_{1:T}^i | y_{1:T}) + \grad_{\theta} f_{\theta}(\lineage{x}_{1:T}^i) \right) = \\
    & \expect{\left( f_{\theta}(x_{1:T}) - \expect{f_{\theta}(x'_{1:T})}{x'_{1:T} \sim p_{\theta}(x_{1:T} | y_{1:T})} \right) \grad_{\theta} \log p_{\theta}(x_{1:T} | y_{1:T}) + \grad_{\theta} f_{\theta}(x_{1:T})}
        {x_{1:T} \sim p_{\theta}(x_{1:T} | y_{1:T})} = \\
    & \expect{f_{\theta}(x_{1:T}) \grad_{\theta} \log p_{\theta}(x_{1:T} | y_{1:T}) + \grad_{\theta} f_{\theta}(x_{1:T})}
        {x_{1:T} \sim p_{\theta}(x_{1:T} | y_{1:T})} =
    \grad_{\theta} \expect{f_{\theta}(x_{1:T})}{x_{1:T} \sim p_{\theta}(x_{1:T} | y_{1:T})} .
\end{align}

Similarly for the consistency of the DPF estimator for the expectation under the unnormalized posterior (Eq. \ref{eq:dpf-unbiased})
\begin{align}
    & \lim_{N \rightarrow \infty} \expect{
    \hat{Z} \sum_{i=1}^N \normalized{w}_T^i \left( f_{\theta}(\lineage{x}_{1:T}^i) \grad_{\theta} \log p_{\theta}(\lineage{x}_{1:T}^i, y_{1:T}) + \grad_{\theta} f_{\theta}(\lineage{x}_{1:T}^i) \right)
    }{PF} = \\
    & \expect{p_{\theta}(y_{1:T}) \left( f_{\theta}(x_{1:T}) \grad_{\theta} \log p_{\theta}(x_{1:T}, y_{1:T}) + \grad_{\theta} f_{\theta}(x_{1:T}) \right)}{x_{1:T} \sim p_{\theta}(x_{1:T} | y_{1:T})} = \\
    & \expect{p_{\theta}(y_{1:T}) \left( f_{\theta}(x_{1:T}) \grad_{\theta} \log p_{\theta}(x_{1:T} | y_{1:T}) + \grad_{\theta} f_{\theta}(x_{1:T}) \right) + f_{\theta}(x_{1:T}) \grad_{\theta} p_{\theta}(y_{1:T})}{x_{1:T} \sim p_{\theta}(x_{1:T} | y_{1:T})} = \\
    & \expect{p_{\theta}(y_{1:T}) f_{\theta}(x_{1:T}) \grad_{\theta} \log p_{\theta}(x_{1:T} | y_{1:T}) + \grad_{\theta} \left( f_{\theta}(x_{1:T}) p_{\theta}(y_{1:T}) \right)}{x_{1:T} \sim p_{\theta}(x_{1:T} | y_{1:T})} = \\
    & \grad_{\theta} \expect{p_{\theta}(y_{1:T}) f_{\theta}(x_{1:T})}{x_{1:T} \sim p_{\theta}(x_{1:T} | y_{1:T})} .
\end{align}

\begin{theorem} \label{th:unbiased}
Eq. \ref{eq:supp-expect-dpf} is an unbiased estimator for the expectation under the unnormalized posterior, that is
\begin{align}
    &\expect{
    \hat{Z} \sum_{i=1}^N \normalized{w}_T^i \left( f_{\theta}(\lineage{x}_{1:T}^i) \grad_{\theta} \log p_{\theta}(\lineage{x}_{1:T}^i, y_{1:T}) + \grad_{\theta} f_{\theta}(\lineage{x}_{1:T}^i) \right)
    }{x_t^i \sim q_{\phi}(x_t | x_{t-1}^{a_t^i}, y_t), \ a_t^{1:N} \sim R(\normalized{w}_{t-1}^{1:N})} = \\
    &\grad_{\theta} \expect{p_{\theta}(y_{1:T}) f_{\theta}(x_{1:T})}{x_{1:T} \sim p_{\theta}(x_{1:T} | y_{1:T})} .
\end{align}
\end{theorem}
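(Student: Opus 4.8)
The plan is to reduce this unbiasedness claim to the \emph{known} unbiasedness of the particle filter for expectations under the unnormalized posterior, a standard result assumed in the preliminaries \citep{moral_feynman-kac_2004}. The crucial observation is that the estimator in the statement has exactly the shape of an ordinary particle-filter estimate $\hat{Z} \sum_{i=1}^N \normalized{w}_T^i \, g_{\theta}(\lineage{x}_{1:T}^i)$ applied to the single test function
\begin{align}
    g_{\theta}(x_{1:T}) = f_{\theta}(x_{1:T}) \grad_{\theta} \log p_{\theta}(x_{1:T}, y_{1:T}) + \grad_{\theta} f_{\theta}(x_{1:T}) . \nonumber
\end{align}
Although $g_{\theta}$ is written in terms of derivatives in $\theta$, at the fixed parameter value at which we evaluate the estimator it is simply a fixed (vector-valued) function of $x_{1:T}$, so the standard unbiasedness property applies to it verbatim.

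First I would rewrite the right-hand side. Writing the unnormalized-posterior expectation as an integral, $\expect{p_{\theta}(y_{1:T}) f_{\theta}(x_{1:T})}{x_{1:T} \sim p_{\theta}(x_{1:T}|y_{1:T})} = \int f_{\theta}(x_{1:T}) \, p_{\theta}(x_{1:T}, y_{1:T}) \, \mathrm{d}x_{1:T}$, I would pull the gradient inside the integral (legitimate under the regularity conditions assumed throughout) and apply the product rule, $\grad_{\theta}(f_{\theta} \, p_{\theta}) = p_{\theta}(\grad_{\theta} f_{\theta} + f_{\theta} \grad_{\theta} \log p_{\theta})$. Factoring out the density turns the right-hand side into $\int g_{\theta}(x_{1:T}) \, p_{\theta}(x_{1:T}, y_{1:T}) \, \mathrm{d}x_{1:T}$, i.e.\ exactly the expectation of $g_{\theta}$ under the unnormalized posterior. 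This is essentially the reverse of the product-rule step appearing in the consistency argument immediately preceding the theorem.

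Next I would invoke the exact (finite-$N$, not merely asymptotic) unbiasedness of the particle filter: for any fixed integrable test function $g$, the estimate satisfies $\expect{\hat{Z} \sum_{i=1}^N \normalized{w}_T^i \, g(\lineage{x}_{1:T}^i)}{\text{PF}} = \int g(x_{1:T}) \, p_{\theta}(x_{1:T}, y_{1:T}) \, \mathrm{d}x_{1:T}$, which is the unbiasedness of the unnormalized Feynman--Kac particle measures. Applying this with $g = g_{\theta}$ shows the left-hand side equals $\int g_{\theta}(x_{1:T}) \, p_{\theta}(x_{1:T}, y_{1:T}) \, \mathrm{d}x_{1:T}$, which by the previous paragraph is exactly the right-hand side, completing the argument.

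The main obstacle is purely a matter of regularity rather than of the core idea: I must be able to (i) interchange $\grad_{\theta}$ with the integral over $x_{1:T}$ and (ii) guarantee that $g_{\theta}$ is integrable against the unnormalized posterior so that the unbiasedness property genuinely applies. Both follow from the standard assumptions stated in Section \ref{sec:pf}; the one point deserving explicit care is the observation noted above, namely that the $\theta$-dependence of $g_{\theta}$ is harmless because unbiasedness is asserted pointwise in $\theta$ for the fixed function obtained by freezing $\theta$.
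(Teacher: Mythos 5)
Your proof is correct, but it takes a genuinely different route from the paper's. The paper proves the identity from scratch by induction on $T$: it defines $g$ exactly as you do, shows that integrating out the final transition maps the pair $(f,g)$ to a pair $(h, h\,\grad_{\theta}\log p_{\theta}+\grad_{\theta}h)$ of the same form, and then peels off one time step at a time using the tower property and the resampling expectation $\E_{a}\bigl[\tfrac{1}{N}\sum_i(\cdot)^{a^i}\bigr]=\sum_i\normalized{w}^i(\cdot)^i$. You instead observe that the estimator is the standard unnormalized particle estimate of the single frozen test function $g_{\theta}$, rewrite the target via the product rule as $\int g_{\theta}(x_{1:T})\,p_{\theta}(x_{1:T},y_{1:T})\,\mathrm{d}x_{1:T}$, and invoke the generic finite-$N$ unbiasedness of unnormalized Feynman--Kac particle measures --- a result the paper explicitly assumes in \cref{sec:pf}. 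Your reduction is shorter and modular, and your remark that the $\theta$-dependence of $g_{\theta}$ is harmless because unbiasedness is asserted pointwise in $\theta$ is exactly the right observation. Two points deserve care if you flesh this out: (i) the unbiasedness result you cite must be its path-space version, i.e.\ for test functions of the full ancestral lineage $\lineage{x}_{1:T}^i$ rather than of the terminal state only (this is standard in the Feynman--Kac framework but is not how the result is most often quoted); and (ii) it holds only for resampling schemes that are unbiased in the offspring-count sense, which is precisely the condition the paper's inductive step uses explicitly --- the paper's self-contained induction makes that dependence visible, whereas your reduction hides it inside the cited theorem.
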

\begin{proof}
The proof is by induction on $T$ and the inductive hypothesis is precisely the theorem for any function $f$.

The inductive step is as follows. Let $g(x_{1:T+1}) = f_{\theta}(x_{1:{T+1}}) \grad_{\theta} \log p_{\theta}(x_{1:{T+1}}, y_{1:{T+1}}) + \grad_{\theta} f_{\theta}(x_{1:{T+1}})$ and $h(x_{1:T}) = \expect{p_{\theta}(y_{T+1} | x_T^{i}) f_{\theta}(x_{1:{T+1}})}{x_{T+1} \sim p_{\theta}(x_{T+1} | y_{T+1}, x_T^{i})}$. We first show that
\begin{align}
    &\expect{p_{\theta}(y_{T+1} | x_T) g(x_{1:T+1})}{x_{T+1} \sim p_{\theta}(x_{T+1} | y_{T+1}, x_T)} = \\
    &\expect{p_{\theta}(y_{T+1} | x_T) \left( f_{\theta}(x_{1:{T+1}}) \grad_{\theta} \log p_{\theta}(x_{1:{T+1}}, y_{1:{T+1}}) + \grad_{\theta} f_{\theta}(x_{1:{T+1}}) \right)}{x_{T+1} \sim p_{\theta}(x_{T+1} | y_{T+1}, x_T)} = \\
    & \expect{p_{\theta}(y_{T+1} | x_T) f_{\theta}(x_{1:{T+1}})}{x_{T+1} \sim p_{\theta}(x_{T+1} | y_{T+1}, x_T)}
        \grad_{\theta} \log p_{\theta}(x_{1:{T}}, y_{1:{T}}) + \\
    &\expect{p_{\theta}(y_{T+1} | x_T) \left( f_{\theta}(x_{1:{T+1}}) \grad_{\theta} \log p_{\theta}(x_{{T+1}}, y_{{T+1}} | x_T) + \grad_{\theta} f_{\theta}(x_{1:{T+1}}) \right)}{x_{T+1} \sim p_{\theta}(x_{T+1} | y_{T+1}, x_T)} = \\
    & h(x_{1:T}) \grad_{\theta} \log p_{\theta}(x_{1:{T}}, y_{1:{T}}) +
    \mathop{\mathbb{E}}_{x_{T+1}} \Bigg[ p_{\theta}(y_{T+1} | x_T) f_{\theta}(x_{1:{T+1}}) \grad_{\theta} \log p_{\theta}(x_{{T+1}} | y_{{T+1}}, x_T) + \nonumber \\
        &\quad\quad p_{\theta}(y_{T+1} | x_T) f_{\theta}(x_{1:{T+1}}) \frac{\grad_{\theta} p_{\theta}(y_{T+1} | x_T)}{p_{\theta}(y_{T+1} | x_T)} + 
        p_{\theta}(y_{T+1} | x_T) \grad_{\theta} f_{\theta}(x_{1:{T+1}}) \Bigg] = \\
    & h(x_{1:T}) \grad_{\theta} \log p_{\theta}(x_{1:{T}}, y_{1:{T}}) +
    \mathop{\mathbb{E}}_{x_{T+1} \sim p_{\theta}(x_{T+1} | y_{T+1}, x_T)} \Big[ p_{\theta}(y_{T+1} | x_T) f_{\theta}(x_{1:{T+1}}) \grad_{\theta} \log p_{\theta}(x_{{T+1}} | y_{{T+1}}, x_T) + \nonumber \\
        &\quad\quad \grad_{\theta} \left( p_{\theta}(y_{T+1} | x_T)  f_{\theta}(x_{1:{T+1}}) \right) \Big] = \\
    & h(x_{1:T}) \grad_{\theta} \log p_{\theta}(x_{1:{T}}, y_{1:{T}}) + 
    \grad_{\theta} \expect{p_{\theta}(y_{T+1} | x_T) f_{\theta}(x_{1:{T+1}})}{x_{T+1} \sim p_{\theta}(x_{T+1} | y_{T+1}, x_T)} = \\
    & h(x_{1:T}) \grad_{\theta} \log p_{\theta}(x_{1:{T}}, y_{1:{T}}) + \grad_{\theta} h(x_{1:T}) .
\end{align}

Then
\begin{align}
    &\expect{
    \left( \prod_{t=1}^{T+1} W_t \right) \sum_{i=1}^N \normalized{w}_{T+1}^i \left( f_{\theta}(\lineage{x}_{1:{T+1}}^i) \grad_{\theta} \log p_{\theta}(\lineage{x}_{1:{T+1}}^i, y_{1:{T+1}}) + \grad_{\theta} f_{\theta}(\lineage{x}_{1:{T+1}}^i) \right)
    }{x_t^{1:N}, a_t^{1:N}} = \\
    &\expect{
    \left( \prod_{t=1}^{T+1} W_t \right) \sum_{i=1}^N \normalized{w}_{T+1}^i g(\lineage{x}_{1:T+1}^i)
    }{x_t^{1:N}, a_t^{1:N}} = \\
    &\expect{ \left( \prod_{t=1}^{T} W_t \right)
        \expect{W_{T+1} \sum_{i=1}^N \normalized{w}_{T+1}^i g(\lineage{x}_{1:T+1}^i)}{a_{T+1}^{1:N}, x_{T+1}^{1:N}}}{x_{1:T}^{1:N}, a_{1:T}^{1:N}} = \\
    &\expect{ \left( \prod_{t=1}^{T} W_t \right)
        \expect{\expect{\sum_{i=1}^N w_{T+1}^i g(\lineage{x}_{1:T+1}^i)}{x_{T+1}^{1:N}}}{a_{T+1}^{1:N}}}{x_{1:T}^{1:N}, a_{1:T}^{1:N}} = \\
    &\expect{ \left( \prod_{t=1}^{T} W_t \right)
        \expect{\frac{1}{N}\sum_{i=1}^N  \expect{\frac{p_{\theta}(x_{T+1}, y_{T+1} | x_T^{a_{T+1}^i})}{q_{\phi}(x_{T+1} | x_T^{a_{T+1}^i})} g(\lineage{x}_{1:T}^{a_{T+1}^i}, x_{T+1})}{x_{T+1} \sim q_{\phi}(x_{T+1} | x_T^{a_{T+1}^i})}}{a_{T+1}^{1:N}}}{x_{1:T}^{1:N}, a_{1:T}^{1:N}} = \\
    &\expect{ \left( \prod_{t=1}^{T} W_t \right)
        \expect{\frac{1}{N}\sum_{i=1}^N  \expect{p_{\theta}(y_{T+1} | x_T^{a_{T+1}^i}) g(\lineage{x}_{1:T}^{a_{T+1}^i}, x_{T+1})}{x_{T+1} \sim p_{\theta}(x_{T+1} | y_{T+1}, x_T^{a_{T+1}^i})}}{a_{T+1}^{1:N}}}{x_{1:T}^{1:N}, a_{1:T}^{1:N}} = \\
    &\expect{ \left( \prod_{t=1}^{T} W_t \right)
        \sum_{i=1}^N \normalized{w}_T^i \expect{p_{\theta}(y_{T+1} | x_T^{i}) g(\lineage{x}_{1:T}^{i}, x_{T+1})}{x_{T+1} \sim p_{\theta}(x_{T+1} | y_{T+1}, x_T^{i})}}{x_{1:T}^{1:N}, a_{1:T}^{1:N}} = \\
    &\expect{ \left( \prod_{t=1}^{T} W_t \right)
        \sum_{i=1}^N \normalized{w}_T^i \left( h(x_{1:T}) \grad_{\theta} \log p_{\theta}(x_{1:{T}}, y_{1:{T}}) + \grad_{\theta} h(x_{1:T}) \right)}{x_{1:T}^{1:N}, a_{1:T}^{1:N}} = \\
    &\grad_{\theta} \expect{p_{\theta}(y_{1:T}) h(x_{1:T})}{x_{1:T} \sim p_{\theta}(x_{1:T} | y_{1:T})} =
    \grad_{\theta} \expect{p_{\theta}(y_{1:T+1}) f_{\theta}(x_{1:T+1})}{x_{1:T+1} \sim p_{\theta}(x_{1:T+1} | y_{1:T+1})} ,
\end{align}
where the penultimate equality is the inductive hypothesis for $T$ and $h$. This completes the proof.

\end{proof}

\section{Additional experiments}

In Table \ref{tab:bouncing_ball} we provide numerical results for the experiments presented in the main text. We also include an additional experiment, with a comparison to the differentiable particle filter based on optimal transport (DPF-OT) proposed by \citet{corenflos_differentiable_2021}.

\begin{table}
    \centering
    \begin{tabular}{llr}
            \toprule
            N   &     Method   &  LGSSM \\
            \midrule
                  & PF-SF           &       -475.73 $\pm$ 55.69  \\
           10     & SIS             &       -2742.99 $\pm$ 279.97 \\
                  & PF              &       -300.46 $\pm$ 2.62 \\
                  & DPF-SGR (us)    &       \textbf{-292.70 $\pm$ 1.28} \\
            \midrule
                  &   PF-SF              &       -1038.85 $\pm$ 176.02\\
            10k   &   SIS                &       -343.65 $\pm$ 15.61\\
                  &   PF                 &       \textbf{-290.04 $\pm$ 0.00}\\
                  &   DPF-SGR (us)       &       -290.08 $\pm$ 0.02 \\
            \bottomrule
        \end{tabular}
    \quad
    \begin{tabular}{llr}
        \toprule
        N     &     Method   & VRNN \\
        \midrule
                & --   & \multicolumn{1}{c}{--} \\
         10     & SIS           &       -356.22 $\pm$ 0.08 \\
               & PF            &       -355.98 $\pm$ 0.13  \\
              & DPF-SGR (us)  &       \textbf{-355.82 $\pm$ 0.14} \\
        \midrule
                & -- &                  \multicolumn{1}{c}{--}        \\
        20    & SIS              &       -356.41 $\pm$ 0.50\\
              & PF               &       -355.97 $\pm$ 0.16\\
              & DPF-SGR (us)     &       \textbf{-355.74 $\pm$ 0.02} \\
        \bottomrule
    \end{tabular}
    \vspace{5mm}
    \caption{Best values for the test set log-marginal likelihood $\log Z$ for $N$ particles in the LGSSM/VRNN experiments. }
    \label{tab:bouncing_ball}
\end{table}

\subsection{Stochastic Volatility Model} \label{sec:volatility}

\begin{figure}[t]
    \centering
    \includegraphics[width=0.49\textwidth]{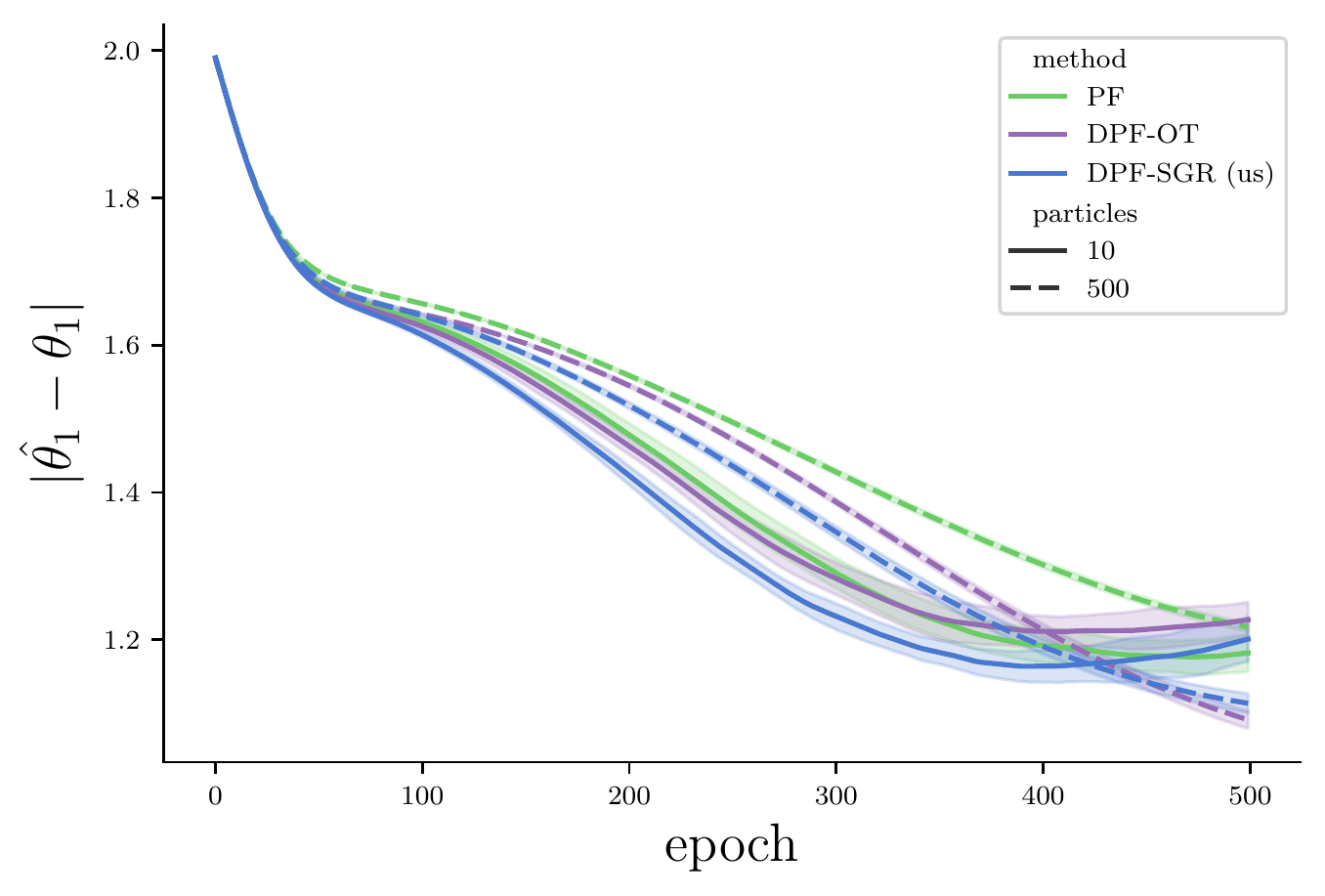}
    \includegraphics[width=0.49\textwidth]{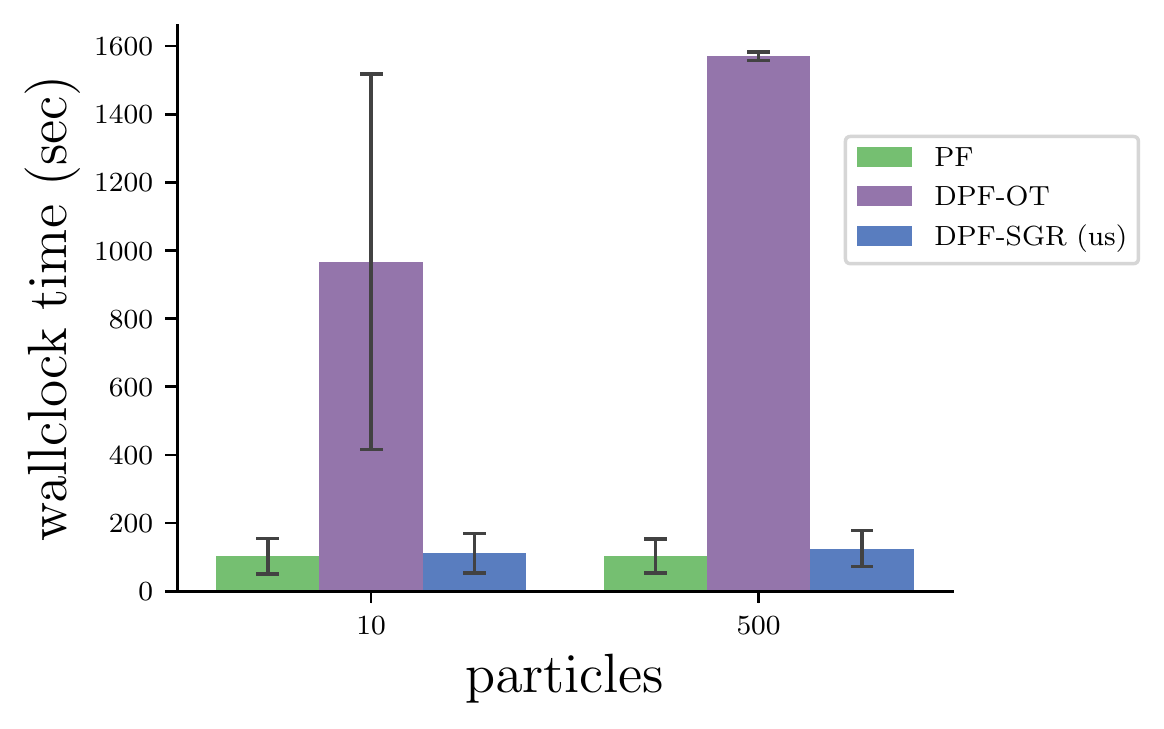}
    \caption{Left is the L1 error in parameter estimation throughout the training of the stochastic volatility model. Right is the average wall clock execution time for a single forward and backward pass for each particle filter variant.}
    \label{fig:experiments/stochastic_volitility}
\end{figure}

In this experiment we learn the parameters of a stochastic volatility model, a popular financial model for which particle filters are known to perform well. We use the implementation provided in the code repository associated with the paper of \citet{corenflos_differentiable_2021}\footnote{\url{https://github.com/JTT94/filterflow}}, which allows us to reuse the DPF-OT settings provided by the original authors. The model is described by the following equations:
\begin{align*}
x_0 &\sim \mathcal{N}\left(0, \frac{\sigma^2_x}{(1 -  \phi^2)}\right)\\
    x_{t+1} &= \mu\cdot(1 - \phi) + \phi\cdot x_t + \sigma_x \cdot \eta_t,\\
    y_t &= \epsilon_t \cdot  \exp\left(\frac{x_t}{2}\right),\\
    \eta_t &\sim \mathcal{N}(0,1), t = 1,..., T,\\
    \epsilon_t &\sim \mathcal{N}(0,1) .
\end{align*}

We generate synthetic data with T = 100, $\mu$ = 2, $\phi$ = 0.9, $\sigma_x$ = 1, $\theta_{1:3} = \{\mu,\phi, \sigma_x \}$ and train the model for 500 epochs using Adam with a learning rate of $0.01$. Resampling is triggered when the effective sample size falls beneath $N/2$, and for all methods we use the bootstrap proposal like in the LGSSM experiment. Figure \ref{fig:experiments/stochastic_volitility} shows progression of the parameter estimation error $|\hat{\mu} - \mu|$ throughout training and the execution time of different algorithms. DPF-SGR produces results at least on par with DPF-OT, while executing almost as fast as PF.

\section{Additional background}

Particle filters \citep{doucet2009tutorial,chopin_introduction_2020,doucet2001introduction} are an extremely popular family of algorithms for state estimation in non-linear state space models. Their versatility, ease of implementation, and well-understood theoretical behavior make them a popular choice in a wide variety of application domains, including robotics, ecology, and finance. The crucial advantage of particle filters is that they can adaptively focus computation on more promising trajectories, which is usually accomplished through discrete resampling steps. Since particle filters also provide estimates of the log-marginal likelihood, they can be used to facilitate model learning.

It has been known for a long time \citep{wengert_simple_1964, griewank_automatic_1989} that programs computing derivatives can be automatically obtained from programs computing differentiable functions, the corresponding algorithms being known as ``automatic differentiation'' (AD). A specialized version of this technique is used in deep learning under the name ``backpropagation'' \citep{rumelhart_learning_1986}, more recently becoming popular across all of machine learning \citep{baydin_automatic_2018}, accelerated by the availability of open-source libraries such as PyTorch \citep{paszke_pytorch_2019} and TensorFlow \citep{tensorflow2015-whitepaper}. In recent years there has been an explosion of interest in the machine learning community in computing gradients of expressions involving sampling random variables, motivated primarily by problems in deep generative modeling \citep{neal_view_1998} and reinforcement learning \citep{williams_simple_1992}. While generic methods for extending AD to handle stochastic computations have been developed \citep{schulman_gradient_2016, foerster_dice_2018}, the variance of resulting gradient estimators is still problematic\citep{rezende_stochastic_2014} where the reparameterization trick \citep{kingma_auto-encoding_2014, ranganath_black_2013} does not apply, in particular for discrete variables.

Building on those developments, it is becoming increasingly popular to learn sequential latent variable models by gradient ascent using estimators derived from the application of AD to the log-marginal likelihood estimators obtained with particle filters. Examples include robotic localization \citep{jonschkowski_differentiable_2018}, world-modeling in reinforcement learning \citep{igl2018deep}, and deep generative modeling of music and speech \citep{maddison_filtering_2017}. The main obstacle is the discrete resampling steps, for which existing gradient estimators suffer from excessive variance, leading to the dependence of the resampling probabilities on model parameters typically being ignored \citep{le_auto-encoding_2018, maddison_filtering_2017, naesseth_variational_2018}, introducing bias that was shown not to vanish asymptotically \citep{corenflos_differentiable_2021}.

Several recent articles\citep{corenflos_differentiable_2021, zhu_towards_2020, karkus_particle_2018} have proposed continuous relaxations of particle filter discrete resampling steps to avoid this problem, with different trade-offs and varying degrees of complexity. 

Alternatively, there exist consistent estimators of the gradient of log-marginal likelihood that can be obtained with an unmodified pass of a particle filter with usual discrete resampling \citep{poyiadjis_particle_2011}, but until now these could not be obtained by applying AD to the log-marginal likelihood estimator produced by a particle filter.

\subsection{Particle filters} \label{sec:supp-pf}

Here we briefly review the basic particle filter algorithm, summarized in Algorithm 1, referring the reader to \citep{chopin_introduction_2020,doucet2009tutorial,doucet2001introduction} for more details. For a given sequence of observations $y_{1:T}$, particle filters approximate the filtering distributions $p_{\theta}(x_t | y_{1:t})$ and the marginal likelihood $p_{\theta}(y_{1:T})$ of a state space model with the following factorization of the joint
\begin{align}
    p_{\theta}(x_{0:T}, y_{1:T}) = p_{\theta}(x_0) \prod_{t=1}^T p_{\theta}(x_t | x_{t-1}) p_{\theta}(y_t | x_t).
\end{align}

The algorithm starts by sampling $N$ particles independently from an initial distribution $x_0^i \sim p_{\theta}(x_0)$ and setting their weights to be uniform $w_0^i = \frac{1}{N}$. Then, at each iteration it advances one step forward in time, by first advancing each particle according to a proposal distribution $q_{\phi}(x_{t+1} | x_t)$, then accumulating the importance sampling ratio $\frac{p_{\theta}(x_{t+1} | x_t)}{q_{\phi}(x_{t+1} | x_t)} $ and the likelihood $p_{\theta}(y_{t+1} | x_{t+1})$ in the corresponding weight, saving the sum of weights and normalizing them before proceeding to the next time step. We assume that $q$ and $p$ have the same support and that the likelihoods are always non-zero, which guarantees that all the weights are always positive. We also assume that both $p$ and $q$ are differentiable with respect to their parameters.

As the weights accumulate more terms, they become increasingly unbalanced and it is more efficient to focus more computational resources to particles with higher weights. This is accomplished via a resampling step, which in our formulation is optionally performed at the beginning of each time step. The ancestral indices for each particle are chosen from a resampling distribution $R(\normalized{w}_{t-1}^{1:N})$, which in the simplest multinomial resampling picks ancestral indices independently from a categorical distribution, but other variants like stratified and systematic resampling are usually favored in practice \citep{murray2016parallel}. We assume post-resampling weights are always uniform for simplicity, although non-uniform resampling schemes also exist \citep{fearnhead_-line_2003}. Note that we write $\resampled{w}$ for the weights obtained after resampling, which in a standard particle filter are always $\frac{1}{N}$, $w$ for weights after the likelihood was accumulated at a given time step, and $\normalized{w}$ for the subsequently normalized weights.

Regardless of the specific details of the particle filter algorithm used, the marginal likelihood $Z = p_{\theta}(y_{1:T})$ is approximated by $\hat{Z} = \prod_{t=1}^T W_t$. This estimator is known to be unbiased and under weak assumptions consistent. The estimator for the log-marginal likelihood $\log p_{\theta}(y_{1:T})$ given by $\log \hat{Z}$ is consistent but biased. Similarly, a particle filter produces consistent but biased estimators for expectations under the posterior, while the estimators for the expectations under the unnormalized posterior are additionally unbiased. We refer the reader to \citet{moral_feynman-kac_2004} and \citet{chopin_introduction_2020} for detailed convergence results.

\subsection{Score function estimation with particle filters}

Maximum likelihood learning of generative models by gradient ascent can be performed using the
score function, which is the gradient of the log-marginal likelihood $\grad_{\theta} \log p_{\theta}(y_{1:T})$. While the exact value of the score function is in general intractable, it can be approximated using any method for sampling from the posterior distribution with the help of Fisher's identity
\begin{align}
    &\grad_{\theta} \log p_{\theta}(y_{1:T}) =
    \grad_{\theta} \log \expect{p_{\theta}(x_{1:T}, y_{1:T})}{x_{1:T} \sim p(x_{1:T})} = \\
    &\expect{\grad_{\theta} \log p_{\theta}(x_{1:T}, y_{1:T})}{x_{1:T} \sim p_{\theta}(x_{1:T} | y_{1:T})} .
\end{align}

In the particle filtering context, the approximate samples from the posterior can be obtained by tracing the ancestral lineages of the surviving particles form the final time step. Let $\lineage{x}_{1:t}^i$ be the ancestral lineage of a particle $i$ at time $t$, that is $\lineage{x}_{1:t+1}^i = (\lineage{x}_{1:t}^{a_{t+1}^i}, x_{t+1}^i)$ and $\lineage{x}_1^i = x_1^i$. \citet{poyiadjis_particle_2011} propose a consistent score function estimator of the form
\begin{align}
    \grad_{\theta} \log p_{\theta}(y_{1:T}) \approx \sum_{i=1}^N \normalized{w}_T^i \grad_{\theta} \log p_{\theta}(\lineage{x}_{1:T}^i, y_{1:T}) .
    \label{eq:supp-poyadjis}
\end{align}
They also show how to compute this estimator without having to store full ancestral lineages and that the variance of this estimator can be further reduced using the marginal particle filter. 

In this work we show how to obtain this estimator through automatic differentiation of $\log \hat{Z}$ produced by a particle filter, without modifying the estimators obtained on the forward pass.

\subsection{The stop-gradient operator} \label{sec:supp-stop-gradient}

The stop-gradient operator is a standard component of automatic differentiation libraries, called \lstinline{stop_gradient} in Tensorflow and \lstinline{detach} in PyTorch, producing an expression that evaluates to its normal value on the forward pass but has zero gradient. While stop-gradient does not correspond to any formal mathematical function, we need to include it in our formulas to formally reason about programs involving stop-gradient and how automatic differentiation acts upon them. Following \citet{foerster_dice_2018}, we denote the stop-gradient with a $\bot$, for example $\grad \stopgrad{x} = 0$.

Expressions involving the stop-gradient operator should be understood as expressions in a formal calculus, which correspond to programs written using automatic differentiation libraries. Following established terminology, we sometimes refer to them as \emph{surrogate expressions}. For such expressions we need to distinguish between the results obtained by evaluating them on the forward pass and the expressions obtained by applying the gradient operator. Following \citet{van_krieken_storchastic_2021}, we denote evaluation of an expression $E$ with an overhead right arrow $\evaluate{E}$. Operationally, evaluation removes all instances of stop-gradient, provided that no gradient operator is acting on them. Gradients satisfy the usual chain rules, except that gradients of any expression wrapped in stop-gradient are zero. Formally, the key equations are:
\begin{align}
    \evaluate{f(E_1, \dots, E_n)} &= f(\evaluate{E_1}, \dots, \evaluate{E_2}) \\
    \evaluate{\stopgrad{E}} &= \evaluate{E} \\
    \grad{\stopgrad{E}} &= 0 \\
    \evaluate{\grad{E}} &= \grad{E} \quad \text{if} \ \bot \notin E  ,
\end{align}
where $f$ is some differentiable function and $\bot \notin E$ means that stop-gradient is not applied anywhere in expression $E$, including its subexpressions. We use this calculus to derive formulas for the estimators obtained by algorithms utilizing stop-gradient in automatic differentiation, which is generally accomplished by using the rules above, along with the chain rule for the gradient operator, to push the gradients inside, past any stop-gradient operators, and then evaluating the resulting expression. Note that in this calculus the gradient operator produces expressions rather than values, so that we can model repeated automatic differentiation. Expressions without stop-gradient can be equated with their usual interpretations. Throughout the paper we are careful to avoid using integrals of expressions involving stop-gradient, as we consider those undefined. We also present a short tutorial on using this calculus in the appendix.

In previous work \citep{foerster_dice_2018, van_krieken_storchastic_2021}, the stop-gradient operator has been used to show that certain expressions produce unbiased estimators for derivatives of arbitrary order under repeated automatic differentiation. In this work we consider a different family of expressions involving stop-gradient, showing that under automatic differentiation they produce score function estimators corresponding to the application of Fisher's identity and self-normalizing importance sampling.

\subsection{Automatic differentiation of stochastic computation}

When using AD for computation involving stochastic choices, it is typically desirable to obtain gradient estimators that in expectation equal the gradient of the expected value of the original computation. The generic methods for accomplishing this generally fall into two categories. Consider a computation with a single random variable, $\mathcal{L} = \expect{f_{\theta}(x)}{x \sim p_{\theta}(x)}$. If $p_{\theta}$ is a continuous distribution, we can express $x$ as a deterministic function $h$ of some random variable $\epsilon$ with a fixed distribution independent of $\theta$, such as $\mathcal{N}(0,1)$. With that, the gradient commutes with the expectation
\begin{align}
    \grad_{\theta} \mathcal{L} =
    \grad_{\theta} \expect{f_{\theta}(h(\epsilon, \theta))}{\epsilon \sim \mathcal{N}(0,1)} =
    \expect{\grad_{\theta} f_{\theta}(h(\epsilon, \theta))}{\epsilon \sim \mathcal{N}(0,1)} ,
\end{align}
at which point AD can be applied for a fixed $\epsilon$. This technique is broadly known as the pathwise derivative \citep{glasserman_monte_2003}, while in the machine learning it's usually being referred to as the reparameterization trick \citep{kingma_auto-encoding_2014, ranganath_black_2013}. It is easy to apply for some distributions such as Gaussian and uniform, but in general finding a suitable $\epsilon$ and $h$ that can be efficiently computed is difficult and an active research area \citep{jankowiak_pathwise_2018, graves_stochastic_2016}. When applicable, this method tends to produce gradient estimators with relatively low variance.

An alternative, typically used when reparemeterization is not possible, is based on the following identity
\begin{align}
    \grad_{\theta} \expect{f_{\theta}(x)}{x \sim p_{\theta}(x)} =
    \expect{\grad_{\theta} f_{\theta}(x) + f_{\theta}(x) \grad_{\theta} \log p_{\theta}(x)}{x \sim p_{\theta}(x)} .
\end{align}
The two gradients inside the expectation are typically computed with AD jointly for a fixed $x$ by forming a surrogate expression $f_{\theta}(x) + \stopgrad{f_{\theta}(x)} \log p_{\theta}(x)$. In the reinforcement learning literature this method is known as REINFORCE \citep{williams_simple_1992}, and more broadly referred to as the score function method or the likelihood ratio method \citep{fu_chapter_2006}. We refrain from using those terms to avoid confusion with the score function $\grad_{\theta} \log p_{\theta}(y_{1:T})$ and the likelihood $p_{\theta}(y_{1:T})$ associated with the sequential latent variable model. Instead, we refer to this method as DiCE \citep{foerster_dice_2018}, which produces the same gradient estimator using the following surrogate expression
\begin{align}
    &\evaluate{\grad_{\theta} \left( \frac{p_{\theta}(x)}{\stopgrad{p_{\theta}(x)}} f_{\theta}(x) \right)} =\\
    &f_{\theta}(x) \evaluate{\frac{\grad_{\theta} p_{\theta}(x)}{\stopgrad{p_{\theta}(x)}}} + \evaluate{\frac{p_{\theta}(x)}{\stopgrad{p_{\theta}(x)}}} \grad_{\theta} f_{\theta}(x) = \\
    &f_{\theta}(x) \grad_{\theta} \log p_{\theta}(x) + \grad_{\theta} f_{\theta}(x) ,
\end{align}
while also remaining unbiased under repeated AD. Incidentally, the term $\frac{p_{\theta}(x)}{\stopgrad{p_{\theta}(x)}}$, which \citet{foerster_dice_2018} call ``the magic box'', is precisely the AD-centric expression of the likelihood ratio from which the likelihood ratio method derives its name. The rules for constructing the surrogate expression in the presence of multiple random variables and complex dependencies were given by \citet{schulman_gradient_2016}, with \citet{foerster_dice_2018} deriving equivalent rules for DiCE. While DiCE is broadly applicable, the resulting gradient estimators tend to have high variance, which to some extent can be mitigated through a careful use of baselines \citep{mao_baseline_2019, farquhar_loaded_2019}.

\end{document}